\newcommand {\bsigma}{\boldsymbol{\sigma}} 
\newcommand {\bSigma}{\boldsymbol{\Sigma}} 
\newcommand {\bpsi}{\boldsymbol{\psi}}     
\newcommand {\A}{\boldsymbol{A}}            
\newcommand {\U}{\boldsymbol{U}}           
\newcommand {\V}{\boldsymbol{V}}           
\newcommand {\M}{\boldsymbol{M}}           
\newcommand {\D}{\boldsymbol{D}}           
\newcommand {\bx}{\boldsymbol{x}}          
\newcommand {\bv}{\boldsymbol{v}}          
\newcommand {\bu}{\boldsymbol{u}}          
\newcommand {\R}{\mathbb {R}}              
\newcommand {\N}{\mathbb {N}}              
\newcommand {\QED}{\begin{flushright} $\square$ \end{flushright}}   
\DeclareMathOperator{\Dim}{dim}
\DeclareMathOperator{\Span}{span}
\journalname{International Journal of Computer Vision}
\begin{document}

\title{A New Approach To Two-View Motion Segmentation Using Global Dimension Minimization}

\author{Bryan Poling \and Gilad Lerman}

\institute{Bryan Poling \at
           206 Church St. SE, Minneapolis, MN 55455 \\
           Tel.: +612-625-5099\\
           \email{poli0048@math.umn.edu}
           \and
           Gilad Lerman \at
           206 Church St. SE, Minneapolis, MN 55455 \\
           Tel.: +612-624-5541\\
           \email{lerman@math.umn.edu}
}

\date{Received: date / Accepted: date} 

\maketitle
\sloppy

\begin{abstract}
We present a new approach to rigid-body motion segmentation from
two views. We use a previously developed nonlinear embedding of two-view point correspondences into a 9-dimensional space and identify the different motions by segmenting lower-dimensional subspaces. In order to overcome  nonuniform distributions along the subspaces, whose dimensions are unknown, we suggest the
novel concept of global dimension and its minimization for clustering subspaces
with some theoretical motivation. We propose a fast projected gradient algorithm
for minimizing global dimension and thus segmenting motions from 2-views. We develop an outlier detection framework around the proposed method, and we present state-of-the-art results on outlier-free and outlier-corrupted two-view data for segmenting motion.

\keywords{Global Dimension \and Empirical Dimension \and Subspace Clustering \and Hybrid-Linear Modeling \and Motion Segmentation \and Outliers \and Robust Statistics}
\end{abstract}

\noindent \textbf{Supp. webpage}: \url{http://math.umn.edu/~lerman/gdm}

\section{Introduction}
\label{intro}
A classic problem in computer vision is that of feature-based motion segmentation from two views. In this problem one has two images, taken at different times, of a 3D scene. The scene is assumed to consist of multiple, independently moving rigid bodies. The goal is to identify the different moving objects and estimate a motion model for each one of them. For this purpose, one automatically tracks the locations of visually-interesting ``features'' in the scene, which are visible in both views (e.g., via Lucas Kanade type algorithm~\cite{Lucas-Kanade-20years_2004}). Each feature is represented as a pair of 2-vectors, holding the image coordinates of the feature in the two different views; such a pair is referred to as a {\em point correspondence}. The mathematical problem of feature-based, two-view motion segmentation is to both segment the point correspondences according to the rigid objects to which they belong, and estimate a motion model for each object.

A basic strategy to solve the feature-based, two-view motion segmentation problem is to first cluster point correspondences and then estimate the single-body motions within clusters (well-known methods for single-body motion estimation are described in~\cite{Hartley2000,ma2004book}). This procedure was suggested in~\cite{FengPerona98}, while clustering point correspondences with $K$-means or spectral clustering, and in~\cite{Torr98geometricmotion}, while alternating between clustering and motion segmentation via an EM procedure. Both clustering strategies of \cite{FengPerona98} and \cite{Torr98geometricmotion} are based primarily on spatial separation between the clusters, however, different clusters in this setting may intersect each other (e.g., when motions share a symmetry).

Due to this problem, some algebraic methods have been developed for directly solving for the motion parameters, while eliminating the clustering of point correspondences~\cite{Vidal+2views06,rao_ijcv}.
Another solution is to segment feature trajectories by taking into account their geometric structures, which may be different than spatial separation (the feature trajectories in 2-views are the 4-dimensional vectors concatenating the 2 point correspondences of the same feature from 2 views).

Costeira and Kanade~\cite{Costeira98} showed that under the \emph{affine camera} model, feature trajectories in $n$-views (for $n \geq 2$ these are vectors of length $2 n$) within each rigid body lie on an affine subspace of dimension at most 3. This observation has given rise to several feature-based motion segmentation schemes for $n$-views, which are based on clustering subspaces; we refer to such clustering as Hybrid Linear Modeling (HLM).
Many algorithms have been suggested for solving the
HLM problem, for example, the $K$-flats (KF) algorithm or any of its
variants~\cite{Tipping99mixtures,Bradley00kplanes,Tseng00nearest,Ho03,MKF_workshop09},
methods based on direct matrix factorization~\cite{Boult91factorization-basedsegmentation,Costeira98,Kanatani01,Kanatani02},
Generalized Principal Component Analysis (GPCA)~\cite{Vidal05,Ma07,Ozay10},
Local Subspace Affinity (LSA)~\cite{Yan06LSA}, RANSAC (for HLM)~\cite{Yang06Robust},  Agglomerative Lossy
Compression (ALC)~\cite{Ma07Compression}, Spectral Curvature Clustering (SCC) \cite{spectral_applied}, Sparse Subspace Clustering (SSC)~\cite{ssc09,ssc_elhamifar13},
Local Best-Fit Flats (LBF and its spectral version SLBF)~\cite{LBF_cvpr10,LBF_journal12} and Low-rank Representation (LRR)~\cite{lrr_short,lrr_long}.
Some theoretical guarantees for particular HLM algorithms appear in~\cite{spectral_theory,higher-order,lp_recovery_part2_11,Soltanolkotabi2011,Soltanolkotabi+EC13,ALZ13}. Two recent reviews on HLM are by Vidal~\cite{SubspaceClustering_Vidal} and Aldroubi~\cite{Aldroubi_review_13}.

For the more general and realistic model of the \emph{perspective camera}, it can be shown that feature trajectories from two-views lie on quadratic surfaces of dimension at most 3 (in $\R^4$) (see \S 2).
Arias-Castro et al.~\cite{higher-order} suggested clustering the quadratic surfaces of point correspondences (in $\R^4$) using Higher Order Spectral Clustering (HOSC) for manifold clustering. They demonstrated competitive results on the outlier-free database of~\cite{rao_ijcv}, when assuming that the clusters are of dimension 2. However, their results are not competitive for incorporating dimension 3 and they did not provide any numerical evidence that the dimension of the surfaces was 2 and not 3.

A different approach for clustering these particular quadratic surfaces can be obtained by embedding point correspondences into ``quadratic coordinates'' and then clustering subspaces. More precisely, if a point correspondence $((x,y),(x',y'))$ is mapped into $(x,y,1) \otimes (x',y',1) \in \R^9$, where $\otimes$ denotes the Kronecker product, then these quadratic surfaces are mapped into linear subspaces of dimensions at most 8, which are determined by the fundamental matrices~\cite{Hartley2000,ma2004book,AtevKSCC} of the different motions. Chen et al.~\cite{AtevKSCC} have used this idea for clustering such quadratic mappings of point correspondences by the Spectral Curvature Clustering (SCC) algorithm~\cite{spectral_applied} (they showed that instead of performing the actual mapping, one can apply the kernel trick). They claimed that other HLM algorithms (at that time) did not work well for such embedded data.

The drawback of applying SCC to this quadratic mapping of point correspondences in $\R^9$ is that SCC does not work well with subspaces of mixed dimensions, and the subspace dimensions must be known a-priori. Unfortunately, the subspaces in this application have mixed and unknown dimensions (see \S\ref{sec:Formulating 2-View Segmentation}).
What makes SCC successful for this application is the fact that it takes into account some global information of the subspaces (i.e., for $d$-dimensional subspaces it uses affinities based on arbitrary $d+2$ points, and in particular, far-away points). This helps SCC deal with nonuniform sampling along subspaces with local structure very different than the global one (see \S\ref{sec:Formulating 2-View Segmentation}). On the other hand, local methods (e.g., \cite{LBF_journal12,mapa}) often do not work well in this setting.

The purpose of this paper is to develop an HLM algorithm that can successfully cluster the quadratically-embedded point correspondences in $\R^9$. In particular, it exploits the global structure of the underlying subspaces, i.e., their ``dimensions''. We remark that earlier works \cite{Barbará00usingthe,Gionis:2005:DIC:1081870.1081880,Haro06,Haro08TPMM} used dimension estimators to segment data clusters according to their intrinsic dimension (\cite{Barbará00usingthe} and \cite{Gionis:2005:DIC:1081870.1081880} used box counting estimators of some fractal dimensions and \cite{Haro06,Haro08TPMM} used the statistical estimator of~\cite{Levina05}). However, their methods do not distinguish well subspaces of the same dimension. Here on the other hand, we aim to minimize ``dimensions'' within tentative clusters, instead of estimating them. Thus, we take into account the effect of tentative clusters on these ``dimensions'' and try to optimize accordingly the appropriate choice of clusters.

For this purpose, we propose a class of empirical dimension estimators, and a corresponding notion of global dimension for a mixture of subspaces (a function of the estimated dimensions of its constituent parts).
We propose the global dimension minimization (GDM) algorithm, which is a fast projected gradient method aiming to minimize the global dimension among all data partitions. We also build an outlier detection framework into this development to allow for corrupted data sets. We demonstrate state-of-the-art results for two-view motion segmentation (via quadratic embedding), both in the outlier-free and outlier-corrupted cases.
We even show that these results are competitive with the state-of-the-art results for multiple-views, i.e., using all frames of a video sequence (obtained under the affine camera model). To motivate the use of global dimension, we prove that for special settings and choice of parameters, the global dimension is minimized by the correct partition of the data (representing the underlying subspaces). We then discuss what to do in more general settings.

The paper is organized as follows:
\S\ref{sec:Formulating 2-View Segmentation} briefly explains how the problem of 2-view motion segmentation can be formulated as a problem in HLM; \S\ref{sec:On Global Dimension} introduces global dimension and explains why its minimization can solve the HLM problem under some conditions; \S\ref{sec:A Variational Algorithm} develops a fast projected gradient method for minimizing global dimension; \S\ref{sec:Outlier Detection} develops an outlier detection/rejection framework for global dimension minimization; \S\ref{sec:Results} demonstrates numerical results on real-world 2-view data sets for both outlier-removed and outlier-corrupted data; finally, \S\ref{sec:Conclusion} concludes this work. The appendix contains proofs of the key results in the paper.

\section{Formulating 2-View Motion Segmentation as a Problem in HLM}
\label{sec:Formulating 2-View Segmentation}

One way of formulating the motion segmentation problem in terms of HLM is by exploiting the \emph{Affine Motion Subspace}. Costeira and Kanade~\cite{Costeira98} demonstrated that when a set of features all come from a single rigid body, then under the assumptions of the affine camera model, the corresponding feature trajectories lie in an affine subspace of dimension 3 or less. One can use this fact to partition the set of features by clustering their trajectories into subspaces. This is a popular formulation of the segmentation problem, even when dealing with only two views of a scene.

The formulation involving the affine motion subspace has the advantage that the feature trajectories tend to be nicely distributed in their respective subspaces, and the different subspaces all have nearly the same dimensions. This formulation has the drawback that it requires an affine camera model. The affine camera assumption breaks down when viewing objects close to the camera, or when looking at objects at significantly different ranges. The consequence of this is that the trajectories from a rigid body do not lie within a subspace, but rather in a manifold which is only locally approximated by a subspace of dimension at most 3.

When dealing with 2-view segmentation, there is another approach, based on a more general camera model, which avoids this problem of distortion. This approach assumes a perspective camera, and relies on the fundamental matrix~\cite{Hartley2000} for a rigid body.

Indeed, if $\mathbf{F}=(F_{i,j})_{i,j=1}^3$ is the fundamental matrix for a rigid body, and $\mathbf{x}_h=(x,y,1)^T$ and $\mathbf{x}'_h=(x',y',1)^T$ together form a point correspondence (in standard homogenous coordinates) from that body, then
\begin{equation}
\notag {\mathbf{x}'_h}^T \vec{F} \mathbf{x}_h=0,
\end{equation}
which is algebraically equivalent to
\begin{equation}
\label{eq:Fv0}
\textrm{vec}(\vec{F}) \cdot \bv = 0,
\end{equation}
where
\begin{equation}
\notag \textrm{vec}(\vec{F}) = (F_{11}, F_{12}, F_{13}, F_{21}, F_{22}, F_{23}, F_{31}, F_{32}, F_{33})^T
\end{equation}
and
\begin{equation}
\notag \bv = (xx',  x'y,  x',  xy',  yy',  y',  x,  y, 1)^T = (x,  y,  1)^T \otimes (x',  y',  1)^T.
\end{equation}

We refer to the vector $\bv$ as the \emph{nonlinear} or \emph{Kronecker} embedding of a point correspondence (recall that $ \otimes$ is the Kronecker product). The vectors obtained through this nonlinear embedding for feature points on the same rigid object lie in a linear subspace of $\R^9$ of dimension at most 8. Indeed, \eqref{eq:Fv0} says that there is a vector $\textrm{vec}(\vec{F}) \in \R^9$
orthogonal to all of the feature trajectories in this set (it also shows that the linear embedding $(x,y,x',y')$ lies on a 3-dimensional quadratic manifold).
However, the subspace dimension can decrease due to two different reasons. First of all, if there are very few points (per motion), then they may span a lower-dimensional subspace. The second cause is degeneracy in the 3D configuration of the features. If all world points and both camera centers live on a ruled quadratic surface\footnote{A surface $S$ is ruled if through every point of $S$ there exists a straight line that lies on $S$.}, then their corresponding subspace has dimension 7 or less. In particular, if all world points (but not necessarily the camera centres) are coplanar, the corresponding subspace will have dimension no larger than 6 (see~\cite[pg. 296]{Hartley2000}). Therefore, to make use of this embedding, the hybrid-linear modeling algorithm being employed must be tolerant of subspaces of mixed dimension.

\begin{figure}
\captionsetup{margin=10pt,labelfont=bf}
\centering
\begin{minipage}[b]{0.27\linewidth}
\centering
\includegraphics[width=\linewidth, clip=false]{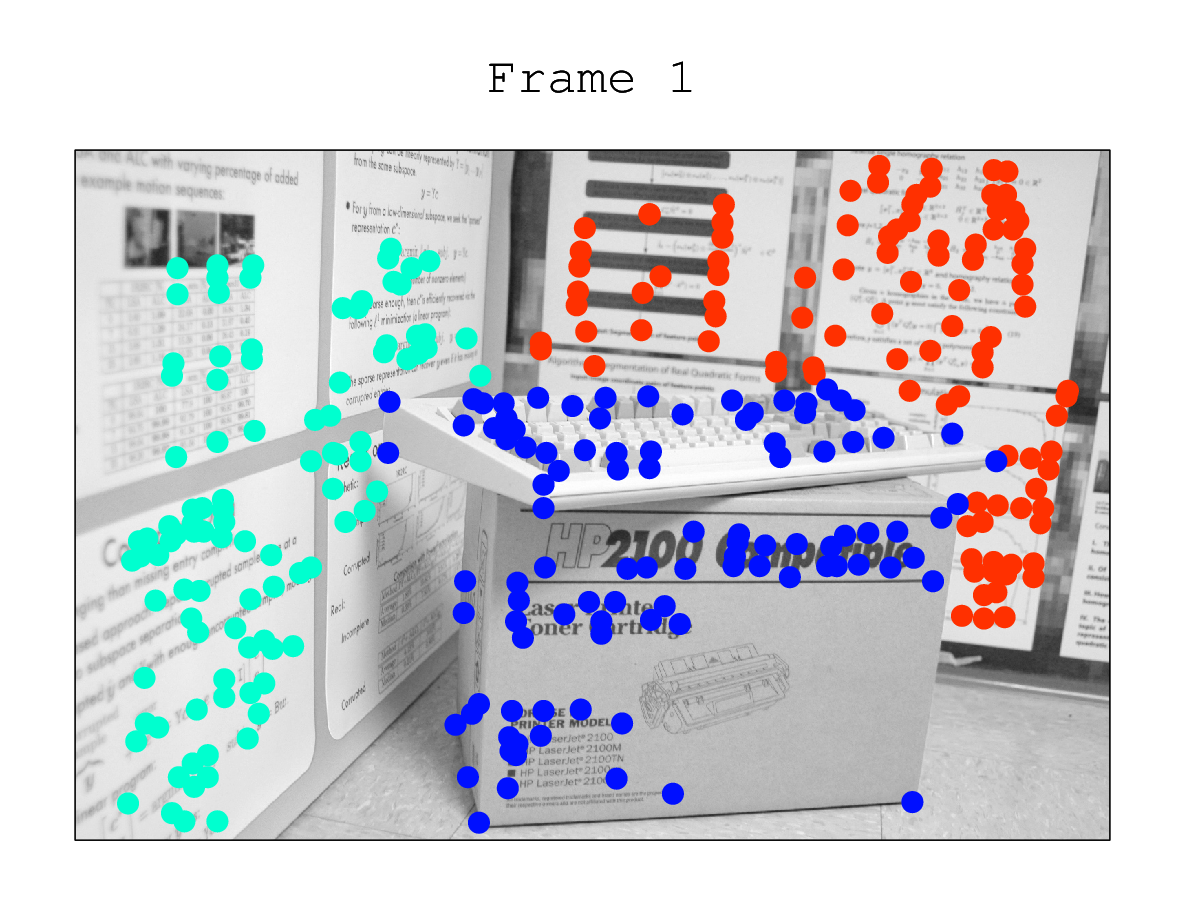} \\
\includegraphics[width=\linewidth, clip=false]{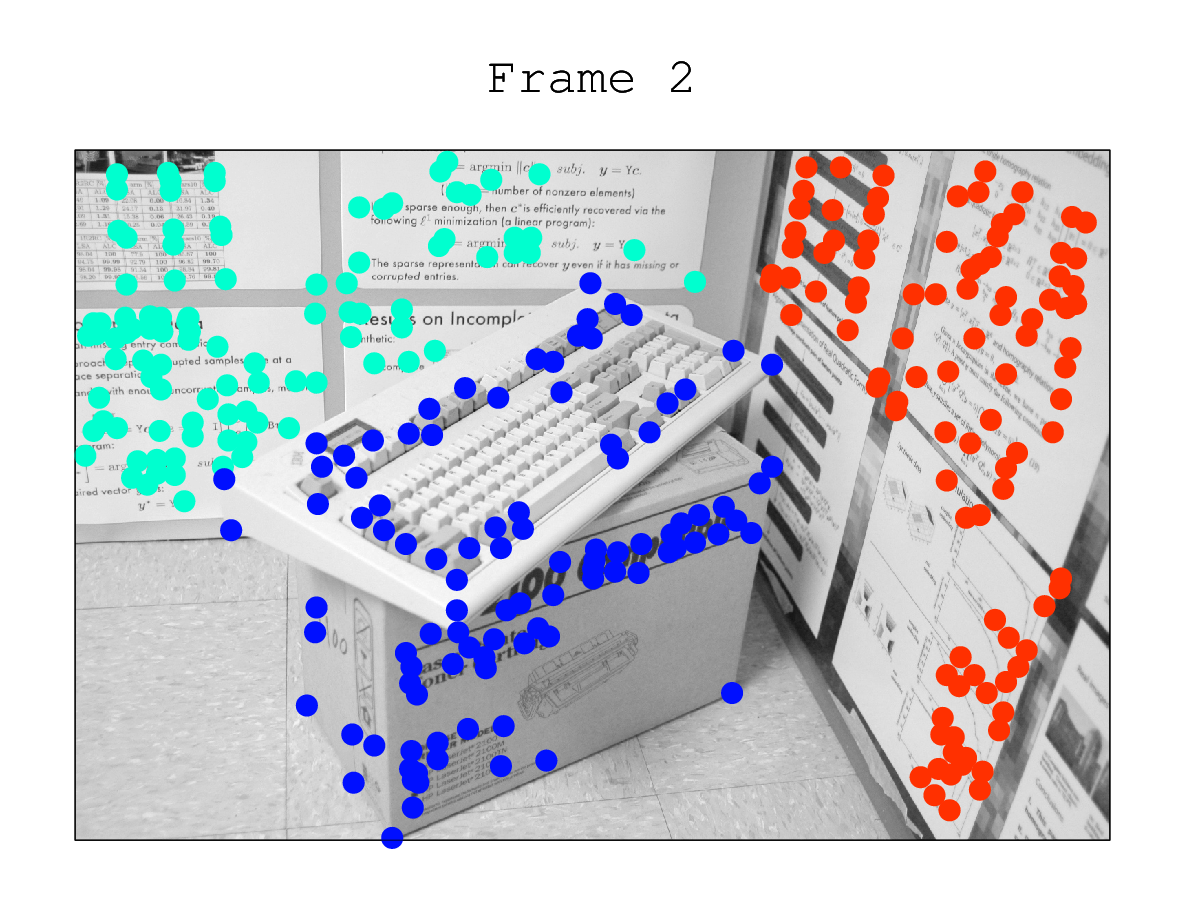}
\label{fig:figure1}
\end{minipage}
\hspace{1 mm}
\includegraphics[width=.6\linewidth, clip=true, trim=40mm 34mm 40mm 35mm]{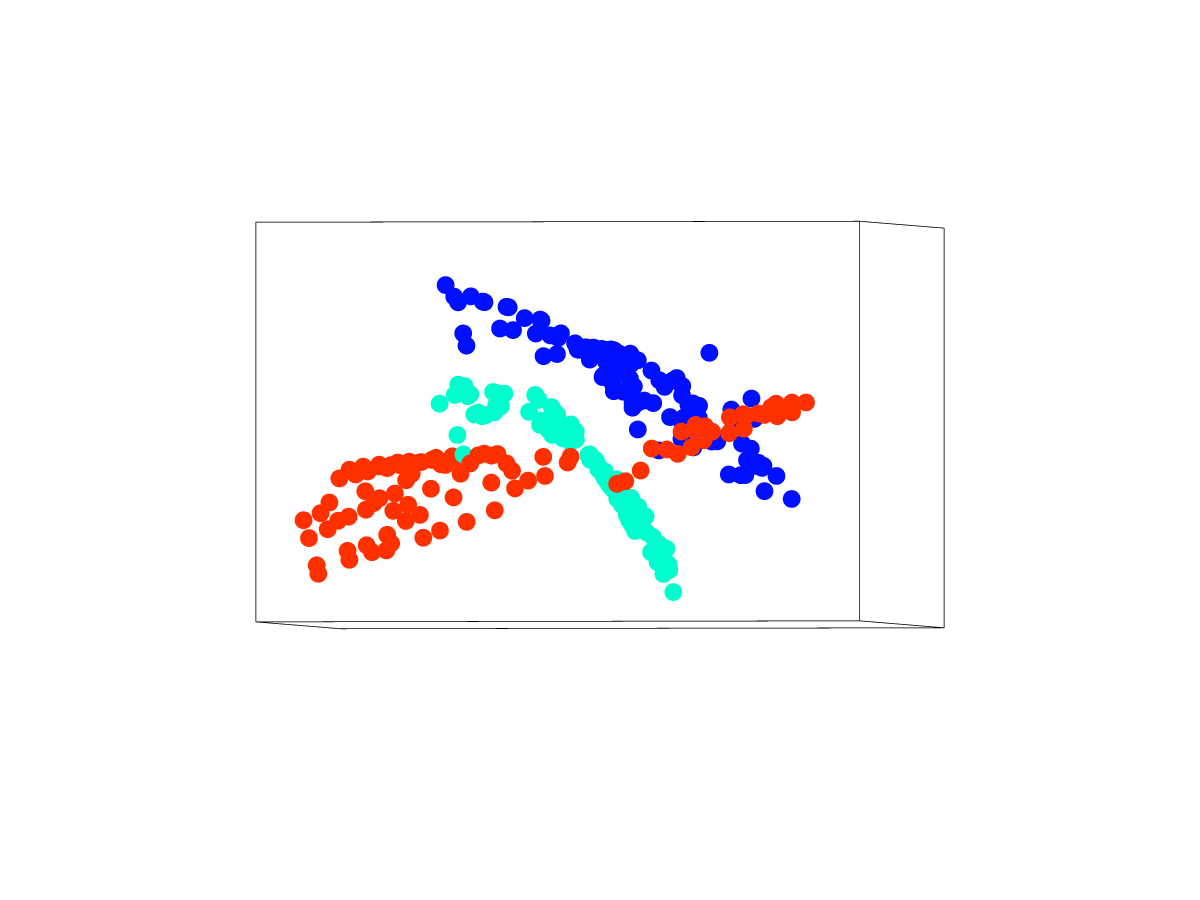}
\caption{\label{fig:Cartoon}Two views of a 3D scene with features overlayed (left), and the nonlinearly embedded point correspondences in $\R^9$, projected onto the 3-dimensional subspace spanned by their 3rd, 4th and 5th principal components (right). (Color figure online)}
\end{figure}

Since the perspective camera assumption is accurate in a much broader range of situations than the affine camera model, subspaces are more apparent with the nonlinear embedding than with the linear embedding. However, the nonlinear embedding distorts the original sampling and results in lower-dimensional structures (of dimension at most 3) within the higher dimensional subspaces (of typical dimensions 6, 7 or 8), which is a serious obstacle for many HLM algorithms, especially ones using local spatial information.

\section{Global Dimension}
\label{sec:On Global Dimension}

From here on, we will be considering 2-view motion segmentation under the perspective camera model, i.e. using the Kronecker embedding. We will present a global HLM method, which is well-suited for handling the data which results from this embedding. We begin our development by providing some intuitive motivation for our approach.

Imagine that we have access to an oracle, who for any set of vectors in $\R^D$, can provide for us a good, robust estimate of the dimensionality of the set\footnote{In a noiseless case this would return the dimension of the linear span of the set of vectors}. Now, suppose we have a set of vectors which are sampled from a hybrid-linear distribution. Consider a general partition of the data set, and define the ``vector of set dimensions'' for that partition to be the vector of oracle-provided approximate dimensions of each respective set in the partition. Our inspiration is the observation that for most partitions one may happen upon, each set in the partition will typically contain points from many of the underlying subspaces. The associated vector of set dimensions will contain relatively large numbers, and the $p$-norm of this vector will be large. The $p$-norm of the vector of set dimensions will be referred to as the \emph{global dimension} of the partition. The best way to make the global dimension small, it would seem, is to try and decrease all of the elements of the vector of set dimensions by grouping together vectors that come from common subspaces. This notion will be made precise and we will show, in fact, that under certain conditions, the \emph{natural partition} of the data set (the one where point assignment agrees with subspace affiliation) is a global minimizer of the global dimension function.

	Our approach to HLM will be to find the partition of a data set that yields the lowest possible global dimension. In this section, we develop the global dimension objective function in two parts. In \S\ref{sec:Empirical Dimension} we suggest a new class of dimension estimators that can perform the role of the oracle in our discussion above. In \S\ref{sec:Defining Global Dimension} we define \emph{global dimension} and explain why we expect its minimizer to reveal the clusters corresponding to the underlying subspaces. A fast algorithm for this minimization will be later described in \S\ref{sec:A Variational Algorithm}.

\subsection{On Empirical Dimension}
\label{sec:Empirical Dimension}
We present here a class of dimension estimators depending on a parameter $\epsilon \in (0,1]$. For $\bu=(u_1,\ldots,u_k)$ and any $p>0$, we use the notation $\|\bu\|_p$ to mean $(u_1^p+ \ldots + u_k^p)^{1/p}$ (even for $p=\epsilon<1$, where $\| \cdot \|_p$ is not a norm).
For a given set of $N$ vectors in $\R^D$, $\{\bv_i\}_{i=1}^{N}$, we denote by $\bsigma = (\sigma_1 \; \sigma_2 \; \ldots \; \sigma_{N \land D})^T$  the vector of singular values of the $D \times N$ data matrix $\A$ (the matrix whose columns are the data vectors).
\begin{sloppypar}
For $\epsilon \in (0,1]$ the \emph{empirical dimension}, denoted by $\hat{d}_\epsilon(\bv_1,\bv_2,...,\bv_N)$ (or simply $\hat{d}_\epsilon$) is defined by
\end{sloppypar}
\begin{equation}
\hat{d}_\epsilon(\bv_1,\bv_2,...,\bv_N) := {{\Vert \bsigma \Vert_\epsilon} \over {\Vert \bsigma \Vert_{\left({{\epsilon} \over {1-\epsilon}}\right)}}}.
\end{equation}
When $\epsilon=1$, this is sometimes called the ``effective rank''\footnote{``Effective rank'' is sometimes defined differently. See \cite{Roy_theeffective}.} of the data matrix~\cite{vershynin_book}.

The following theorem explains why $\hat{d}_\epsilon$ is a good estimator for dimension. Put simply, it says two things. First, if we rotate and/or uniformly scale our set of vectors by a non-zero amount, then the empirical dimension of the set does not change. Second, in the absence of noise, empirical dimension never exceeds true dimension, but it approaches true dimension in the limit (as the number of measurements goes to infinity) for spherically symmetric distributions. From now on we refer to $d$-dimensional subspaces as $d$-subspaces.
\begin{theorem}
\label{thm:Empirical Dimension Properties}
For $\epsilon \in (0,1]$, $\hat{d}_\epsilon$ possesses the following properties:
\begin{enumerate}
\item $\hat{d}_\epsilon$ is invariant under dilations (i.e., scaling).
\item $\hat{d}_\epsilon$ is invariant under orthogonal transformations.
\item \label{prop_dim_bound}
If $\{\bv_i\}_{i=1}^{N}$ are contained in a $d$-subspace of $\R^D$, then $\hat{d}_\epsilon \leq d$.
\item \label{prop_dim_equiv}
If $\{\bv_i\}_{i=1}^{N}$ are i.i.d.~samples from a sub-Gaussian probability measure, which is spherically symmetric within a
$d$-subspace\footnote{A measure is spherically symmetric within a $d$-subspace if it is supported on this subspace and invariant to rotations within this subspace.} and non-degenerate\footnote{A measure is non-degenerate on a subspace if it does not concentrate mass on any proper subspace. In our setting the measure is also assumed to be spherically symmetric, and this assumption is equivalent to assuming the measure does not concentrate at the origin.}, then $\lim_{N \rightarrow \infty} \hat{d}_{\epsilon}(\bv_1,\ldots,\bv_N) = d$ with probability 1.
\end{enumerate}
\end{theorem}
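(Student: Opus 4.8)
The plan is to handle the four properties in increasing order of difficulty, proving the easy ones first so that scale invariance can be reused in the last part. The first two are immediate from the way singular values transform. For Property 1, dilating the data by $c \neq 0$ replaces $\A$ by $c\A$ and hence every singular value $\sigma_j$ by $|c|\sigma_j$; since $\|\cdot\|_p$ is positively homogeneous of degree $1$ for every $p>0$ (including $p=\epsilon<1$, where it is only a quasi-norm), both numerator and denominator of $\hat d_\epsilon$ scale by $|c|$ and the ratio is unchanged. For Property 2, applying an orthogonal $\U$ to each vector replaces $\A$ by $\U\A$, which has the same singular values as $\A$; thus $\bsigma$, and therefore $\hat d_\epsilon$, is unchanged.

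The key to Property 3 is that the two exponents are tuned so that, with $p=\epsilon$ and $q = \epsilon/(1-\epsilon)$, one has $1/p - 1/q = 1/\epsilon - (1-\epsilon)/\epsilon = 1$. If the data lie in a $d$-subspace then $\A$ has rank $r \le d$, so $\bsigma$ has at most $r$ nonzero entries. Applying H\"older's inequality to $\sum_j \sigma_j^{\epsilon} = \sum_j \sigma_j^{\epsilon}\cdot 1$ over the $r$ nonzero entries, with exponents $q/p = 1/(1-\epsilon)$ and its conjugate, yields $\|\bsigma\|_\epsilon \le r^{\,1/p - 1/q}\,\|\bsigma\|_{q} = r\,\|\bsigma\|_{\epsilon/(1-\epsilon)}$, i.e. $\hat d_\epsilon \le r \le d$. (The boundary case $\epsilon = 1$ reduces to the obvious $\|\bsigma\|_1 \le r\max_j \sigma_j$.)

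Property 4 is the substantive part. First I would observe that spherical symmetry within the $d$-subspace forces the mean to vanish and the second-moment matrix to be isotropic on that subspace: $\bSigma := \mathbb{E}[\bv\bv^T] = \lambda P$, where $P$ is the orthogonal projection onto the $d$-subspace. Here sub-Gaussianity guarantees $\lambda < \infty$ and non-degeneracy guarantees $\lambda > 0$. By the strong law of large numbers, $\tfrac{1}{N}\A\A^T = \tfrac{1}{N}\sum_{i=1}^N \bv_i\bv_i^T \to \bSigma$ almost surely. The eigenvalues of $\tfrac{1}{N}\A\A^T$ are exactly $\sigma_j^2/N$, so Weyl's inequality (continuity of eigenvalues under perturbation) gives, almost surely, $\sigma_j^2/N \to \lambda$ for $1 \le j \le d$, while $\sigma_j = 0$ for $j > d$ because the samples lie exactly in the $d$-subspace.

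To finish, I would use scale invariance (Property 1): $\hat d_\epsilon(\bsigma) = \hat d_\epsilon(\bsigma/\sqrt{N})$, and the rescaled singular-value vector converges almost surely to the vector with $d$ equal entries $\sqrt{\lambda}$ followed by zeros. A direct computation shows that at any vector whose only nonzero entries are $d$ copies of a common value $s>0$ one has $\|\bsigma\|_\epsilon = d^{1/\epsilon}s$ and $\|\bsigma\|_{\epsilon/(1-\epsilon)} = d^{(1-\epsilon)/\epsilon}s$, whence the ratio equals $d^{\,1/\epsilon - (1-\epsilon)/\epsilon} = d$. Since each quasi-norm is continuous on $\R^{N\land D}$ and the denominator stays bounded away from $0$ near the limit (its entries include $d$ copies of $\sqrt{\lambda}>0$), $\hat d_\epsilon$ is continuous at the limit point and therefore $\hat d_\epsilon \to d$ almost surely. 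I expect the main obstacle to be exactly this last step combined with the singular-value convergence: one must secure almost-sure convergence of the singular values from the SLLN (needing finiteness of the second moment from sub-Gaussianity and eigenvalue perturbation), and one must rule out degeneration of the denominator so that the concluding continuity argument is legitimate. The identification $\bSigma = \lambda P$ with $\lambda>0$ is precisely where spherical symmetry and non-degeneracy are both indispensable.
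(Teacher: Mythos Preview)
Your argument is correct and, for Properties 1--3, essentially coincides with the paper's: both use homogeneity of the quasi-norms for dilation invariance, orthogonal invariance of singular values, and a H\"older step for the bound $\hat d_\epsilon \le d$ (the paper phrases this last as the generalized H\"older inequality $\|1_{\bsigma}\cdot\bsigma\|_\epsilon \le \|1_{\bsigma}\|_1\,\|\bsigma\|_{\epsilon/(1-\epsilon)}$, which is the same inequality you derive).

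For Property 4 your route differs slightly from the paper's and is in fact cleaner. The paper invokes a sub-Gaussian covariance-estimation concentration inequality (Vershynin) together with a singular-value perturbation bound to show that $\bsigma_n/\sqrt{n}$ converges to $(1,\ldots,1,0,\ldots,0)$; as written this yields convergence in probability, and the almost-sure statement would still require a Borel--Cantelli step using the exponential tails. You instead apply the strong law of large numbers to the $D\times D$ matrices $\tfrac{1}{N}\A\A^T$ (sub-Gaussianity being used only to secure a finite second moment), combine it with Weyl's inequality, and obtain almost-sure convergence of the normalized singular values directly. Both proofs finish identically via scale invariance and continuity of the ratio at the limiting singular-value vector. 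What the paper's approach buys is a quantitative rate on $\|\bSigma_n-\bSigma\|_2$; what yours buys is a more elementary argument that matches the ``with probability 1'' conclusion of the theorem without further work.
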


To gain some intuition into the definition of empirical dimension, consider taking a large set of samples from a spherically symmetric distribution supported by a $d$-subspace. Call the covariance matrix for this distribution $\mathbf{Q}$. As the number of samples becomes large, the empirical covariance matrix approaches $\mathbf{Q}$, which has the first $d$ elements on the main diagonal all equal (call the value $\alpha^2$), and 0's everywhere else. The empirical dimension of the set of vectors involves the singular values of the data matrix, which are approaching the square roots of the eigenvalues of $\mathbf{Q}$. Hence, as the number of samples increases, we get:

{\small
\begin{equation}
\hat{d}_{\epsilon}(\bv_1,\ldots,\bv_N) \rightarrow {{\Vert (\alpha, \alpha, ..., \alpha, 0, ..., 0) \Vert_\epsilon} \over {\Vert (\alpha, \alpha, ..., \alpha, 0, ..., 0) \Vert_{\left({{\epsilon} \over {1-\epsilon}}\right)}}} = {{d^{1/\epsilon} \alpha} \over {d^{(1-\epsilon)/\epsilon} \alpha}} = d.
\end{equation}
}
Thus, for any value of $\epsilon$ in $(0,1]$, the empirical dimension approaches the true dimension of the set as the number of measurements increases.

If we look at a distribution that is not spherically symmetric, but still supported by a $d$-subspace, then empirical dimension tends to under-estimate the true dimension of the distribution, even as the number of samples approaches infinity. This is actually desirable behavior. If we take a spherically symmetric distribution in a $d$-subspace and imagine the process of collapsing it in one direction until it lies in a $(d-1)$-subspace, then true dimension behaves discontinuously. The true dimension of a large set of samples will equal $d$ until the collapsing is complete; at that point the dimension will instantly drop to $d-1$. Empirical dimension smoothly drops from $d$ to $d-1$ during this collapsing process. It is in this setting that we see the necessity of the parameter $\epsilon$. This parameter controls how quickly the empirical dimension drops from $d$ to $d-1$ in this process. More generally, a low value of $\epsilon$ results in a ``strict'' dimension estimator (meaning that it will not under-estimate dimension easily, even when distributions are asymmetric). When $\epsilon$ is large (approaching 1), empirical dimension is a lenient dimension estimator. It is much more tolerant of noise, but it may consequently under-estimate the dimension of highly asymmetric distributions. The trade-off is that when dealing with noisy data or distributions only approximately supported by linear subspaces, a stricter estimator can mistakenly interpret noise or distortion as energy in new directions, thereby causing an over-estimate of dimension. Numerical experiments (e.g., Fig. \ref{fig:EmpiricalDim}) have shown that values of $\epsilon$ between 0.3 and 0.7 seem to provide reasonable estimators, which tend to agree with our intuitive notion of dimension.
\begin{figure}[h]
\captionsetup{margin=10pt,labelfont=bf}
\centering
\includegraphics[width=\linewidth, clip=true, trim=15mm 20mm 15mm 20mm]{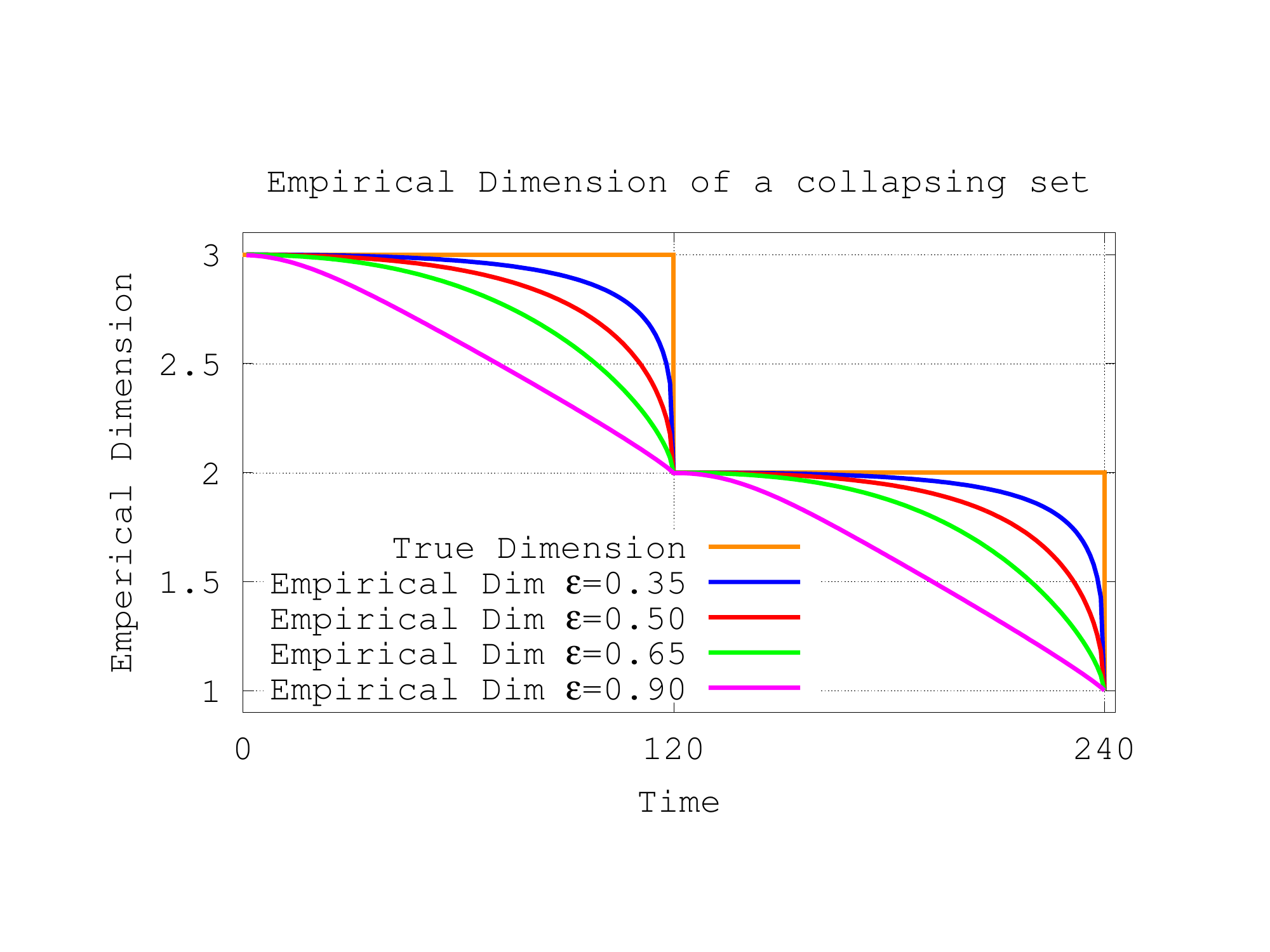}
\caption{\label{fig:EmpiricalDim}The experiment mentioned above is illustrated. A normally-distributed point cloud is created in $\R^3$ and is slowly collapsed into a plane and then a line. One can see that if $\epsilon$ is close to $0$, empirical dimension more closely tracks true dimension, resulting in a strict dimension estimator. If $\epsilon$ is close to $1$, empirical dimension changes more smoothly, resulting in a lenient estimator. (Color figure online)}
\end{figure}

In our application, since the perspective camera model is reasonably accurate (as opposed to the affine model), the nonlinear embedding of point correspondences results in subspaces with rather negligible distortion. We can thus afford a low value of $\epsilon$. In fact, this is needed because the data vectors are frequently distributed in very non-isotropic ways with this embedding. Thus, to avoid underestimating dimension, we choose $\epsilon = 0.35$, which lies just slightly above the lowest value we confirmed for $\epsilon$ (0.3). Notice that this value is not ``tuned'' to individual data sets, but is chosen based on the properties of the application as a whole and the nature of the embedding.

\subsection{On Global Dimension}
\label{sec:Defining Global Dimension}
Assume we are provided a data set $X$ in $\R^D$ (in our application $D=9$ with the nonlinear embedding) and a partition of it $\Pi=\left( \Pi_1,\Pi_2,...,\Pi_k\right)$ for some $k \in \N$ (i.e., $\{\Pi_i\}_{i=1}^k$ are disjoint subsets of $X$ whose union is $X$).
We also assume that $X$ lies on a union of $K$ subspaces and denote the ``correct'' (or natural) partition of the data (where each subset contains only points from a single underlying subspace) by $\Pi_{Nat}$.
For a fixed $\epsilon \in (0,1]$, $\{\hat{d}_{\epsilon, i}\}_{i=1}^k$ are the empirical dimensions of the sets $\{\Pi_i\}_{i=1}^k$. We seek to minimize a function based on these dimensions to recover $\Pi_{Nat}$. To this end we define \emph{global dimension} (GD). When thinking of this function, we take the set of data vectors to be fixed and given, and we view GD as a function of partitions, $\Pi$, of the set of data vectors. For a fixed $p \in (0,\infty)$ (we discuss the meaning of $p$ later) we define GD as follows:

\begin{equation}
\textrm{GD}(\Pi) = \Vert (\hat{d}_{\epsilon,1}, \hat{d}_{\epsilon,2}, ..., \hat{d}_{\epsilon,K})^T \Vert_p =  \left(
\sum_{i=1}^K  {\hat{d}_{\epsilon, i}}^p \right)^{1/p}.
\end{equation}

Our strategy for recovering $\Pi_{Nat}$ will be to try and find the partition of the data set that minimizes $\textrm{GD}(\Pi)$. Intuitively, by trying to minimize the $p$-norm of the vector of set dimensions, we are looking for a partition where all of the set dimensions are small. Imagine trying to minimize this objective function by hand, and starting with a partition close to, but not equal to $\Pi_{Nat}$. If there is a point assigned to the wrong cluster, then removing it from the set it is currently assigned to should result in a significant drop in the dimension of that particular set. Re-assigning that point to the correct set, on the other hand, will have little impact on the dimension of the target set because the point will lie approximately in the span of other points already in the set.

Thus, such a change would cause a significant drop in one of the set dimensions, without disturbing the other sets, and the global dimension will decrease. This would suggest that amongst partitions that are close to it, $\Pi_{Nat}$ yields the lowest global dimension. Additionally, if one considers a ``random'', or usual partition, then each set in that partition will tend to contain vectors from many different subspaces. Each set will have a large dimension, and the global dimension will exceed that of $\Pi_{Nat}$. This would suggest that minimizing global dimension may be a reasonable objective if we want to recover $\Pi_{Nat}$.

Unfortunately, there can exist certain special partitions of a data set that result in low global dimension (in some cases even lower than that of $\Pi_{Nat}$). For example, let us choose $p=1$, so that the global dimension of a partition is simply the sum of the dimensions of its constituent parts. Now consider 3 lines in the plane, and a data set consisting of many points sampled from each line. In this case $\Pi_{Nat}$ will consist of three sets. Each set will contain only points from a single line. The dimension of each set in $\Pi_{Nat}$ is 1. Hence, $\textrm{GD}(\Pi_{Nat}) = 3$. On the other hand, if we consider the ``degenerate'' partition, that simply puts all points in a single set, then since we are in $\R^2$, the dimension of that set, and hence the global dimension of the partition, is 2.

The above example is actually rather special. Consider the same data set, but set $p$ to a large value instead of 1. When $p$ is large, the global dimension approximately returns the largest value from $\{\hat{d}_i\}_{i=1}^K$. Now consider minimizing this quantity, subject to the constraint that the partition contains no more than 3 sets. Minimizing global dimension in this setting penalizes partitions consisting of fewer, higher-dimensional sets instead of multiple, more balanced sets. Specifically, the global dimension of the degenerate partition is again approximately 2, while the global dimension of $\Pi_{Nat}$ is approximately 1, since that is the maximum dimension of its constituent sets. In fact, as shown in the next theorem, using large $p$ effectively resolves the issue of special partitions yielding lower global dimension than $\Pi_{Nat}$.

We will consider the setting where we have $K$ distinct linear subspaces of $\R^D$, each of dimension $d < D$. Call these subspaces $\{ L_k \}_{k=1}^K$. Assume we have a collection of non-degenerate measures $\{ \mu_k \}_{k=1}^K$ supported by these subspaces (so that $\mu_k$ is supported by $L_k$, $k=1,2,...,K$). Let $\{\bv_n\}_{n=1}^N$ be a set consisting of $N_k$ i.i.d. points from each $\mu_k$ (so $N=N_1+N_2+...+N_K$). We require that $N_k > d$ for each $k$ so that each subspace is adequately represented in the data set. Let $GD_{True}$ be global dimension for a fixed parameter $p$, defined using true dimension as the ``dimension estimator'' for a set. That is, $GD_{True}(\Pi)=\Vert ( d_{True}(\Pi_1), ..., d_{True}(\Pi_K) ) \Vert_p$ where the sets $\Pi_k$ are the constituent sets of the partition $\Pi$ and $d_{True}(\bullet)$ returns the true dimension of its parameter set. Then, we get the following result:

\begin{theorem}
\label{THM:Global Dimension Theorem}
Let $\{ L_k \}_{k=1}^K$, $\{ \mu_k \}_{k=1}^K$, $\{\bv_n\}_{n=1}^N$ satisfy the conditions above. If
\begin{equation}
\label{eq:p_lower_bound}
p > ln(K)/(ln(d+1) - ln(d)),
\end{equation}
then amongst all partitions of $\{\bv_n\}_{n=1}^N$ into $K$ or fewer sets, the natural partition is almost surely (w.r.t $\{ \mu_k \}_{k=1}^K$) the unique minimizer of $GD_{True}$.
\end{theorem}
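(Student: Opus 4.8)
The plan is to reduce everything to the single algebraic fact that the hypothesis on $p$ is equivalent to
$$
(d+1)^p > K d^p ,
$$
since $p > \ln(K)/(\ln(d+1)-\ln(d))$ rewrites as $((d+1)/d)^p > K$. First I would evaluate $GD_{True}$ at $\Pi_{Nat}$: because each $\mu_k$ is non-degenerate and $N_k > d$, almost surely the $N_k$ points drawn from $L_k$ span $L_k$, so $d_{True}(\Pi_k)=d$ for every $k$ and $GD_{True}(\Pi_{Nat}) = (Kd^p)^{1/p} = K^{1/p} d$. (Here a partition into $m<K$ sets is viewed as having $K-m$ empty sets of dimension $0$, which contribute nothing to the $p$-sum.) I would then split all competing partitions into two types according to whether they contain a set of true dimension exceeding $d$.

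The first type consists of partitions $\Pi$ possessing at least one set of true dimension $\ge d+1$. Retaining only that one term in the $p$-sum gives $GD_{True}(\Pi) \ge (d+1) > K^{1/p} d = GD_{True}(\Pi_{Nat})$ by the displayed inequality, so every such partition is strictly beaten. Hence the minimizer, and the only remaining candidates to compare against $\Pi_{Nat}$, must have all sets of true dimension $\le d$. The heart of the argument is to show that almost surely the \emph{only} partition into $K$ or fewer sets with all true dimensions $\le d$ is $\Pi_{Nat}$ itself; this simultaneously yields uniqueness, since any nonnatural such partition would satisfy $GD_{True}\le GD_{Nat}$ and thus tie or beat it, so its nonexistence is exactly what is required.

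For this I would use a genericity statement: almost surely, for every set $S$ occurring in any partition, if $S$ contains $a_k$ points from $L_k$ then those points span a subspace of dimension $\min(a_k,d)$, and whenever $d_{True}(S)\le d$ these spans combine additively, so that $d_{True}(S)=\sum_k \min(a_k,d)$. Granting this, two inequalities drive a counting argument. On one hand, since the $N_k>d$ points of $L_k$ span $L_k$ and are distributed among the sets, $\sum_j \min(a_{jk},d) \ge d$ for each $k$. On the other hand, summing the genericity identity over the sets $\Pi_j$ and using $d_{True}(\Pi_j)\le d$ together with $m\le K$ gives
$$
\sum_j d_{True}(\Pi_j) = \sum_k \sum_j \min(a_{jk},d) \ge Kd , \qquad \sum_j d_{True}(\Pi_j) \le m d \le Kd .
$$
Equality must hold throughout, forcing $m=K$, every set of dimension exactly $d$, and $\sum_j \min(a_{jk},d)=d$ for each $k$. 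Since $\sum_j a_{jk}=N_k>d$, the latter forces some set to contain more than $d$ points of $L_k$; being of dimension $\le d$, that set must have span exactly $L_k$ and contain no other points, i.e.\ it is pure. Distinct subspaces give $K$ distinct pure sets inside only $K$ sets, so every set is pure, one per subspace, and the partition is $\Pi_{Nat}$.

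I expect the genericity identity $d_{True}(S)=\sum_k \min(a_k,d)$ (for sets of dimension $\le d$) to be the main obstacle, i.e.\ ruling out accidental linear dependencies among generic points drawn from different subspaces. I would argue that any unexpected dependency cuts out a measure-zero algebraic subset of the product measure $\prod_k \mu_k$ unless it is forced by the subspace arrangement, and that an arrangement-forced dependency can arise only once the tentative span would already exceed dimension $d$ (as the $L_k$ are distinct of common dimension $d$) — a case the dimension-$\le d$ restriction precludes; a union bound over the finitely many subcollections of the finite sample then makes the identity hold simultaneously with probability one. The delicate bookkeeping is exactly when intersecting subspaces \emph{force} a dependency, and I expect everything else to be routine.
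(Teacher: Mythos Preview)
Your overall architecture is identical to the paper's: compute $GD_{True}(\Pi_{Nat}) = K^{1/p}d$, show that every non-natural partition into $\le K$ sets almost surely contains a set of true dimension $\ge d+1$, and then use the hypothesis on $p$ in the form $(d+1)>K^{1/p}d$ to conclude. The paper also isolates this middle step as a lemma (``if $\Pi\neq\Pi_{Nat}$ then some set has dimension $\ge d+1$'') and derives the same double-counting conclusion $\sum_j d_{True}(\Pi_j)=Kd$, forcing every set to have dimension exactly $d$.

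The only real difference is how the genericity is packaged. You aim for a single identity $d_{True}(S)=\sum_k\min(a_k,d)$ valid whenever $d_{True}(S)\le d$, and you correctly flag that excluding arrangement-forced dependencies is the delicate point. The paper instead proves a one-step ``Minor Lemma'': if $\dim(Q)\le d$ and $Q$ contains fewer than $d$ points from $L_i$, then a fresh point from $L_i$ a.s.\ lies outside $\mathrm{span}(Q)$ (because $\mathrm{span}(Q)\cap L_i$ is a proper subspace of $L_i$, hence $\mu_i$-null). This lemma is elementary and sidesteps any analysis of how the $L_k$ intersect. The paper then selects $d$ points from each subspace and removes them one at a time, invoking the Minor Lemma in reverse at each step, to get $\sum_j d_{True}(\Pi_j)\ge Kd$; finally it adds back one extra point from a subspace whose points are split across sets to push some set to dimension $d+1$. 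Your identity is exactly what one obtains by iterating the paper's Minor Lemma, so the two arguments are equivalent in content; the paper's formulation just gives you a clean, self-contained proof of the genericity you need without having to reason about the subspace arrangement at all.
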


The weakness of the above theorem is that it requires all of the intrinsic subspaces to have the same dimension. In practice, the global dimension objective function appears to be rather robust to subspaces with mixed dimensions. If there is a large difference in dimension between two subspaces in a dataset, then the minimum of global dimension tends to be very near $\Pi_{Nat}$, the only difference being that a few points from the higher-dimensional set are re-assigned to lower-dimensional sets to balance out the set dimensions.

Theorem \ref{THM:Global Dimension Theorem} gives us a quantitative way of selecting an appropriate value of $p$ for our application. Specifically, if we identify the largest number of clusters we will need to address ($K$) and an upper bound for $d$, then the right-hand side (RHS) of~\eqref{eq:p_lower_bound} gives us a lower bound on the value of $p$. As long as $p$ is larger than this bound, then Theorem \ref{THM:Global Dimension Theorem} ensures that the natural partition (uniquely) minimizes global dimension. Table \ref{Table:Min p} exemplifies the values of the RHS of~\eqref{eq:p_lower_bound} for different values of $d$ and $K$. We do not want to choose $p$ extravagantly large because of the potential for numerical issues when taking large powers. In our setting, we want to be able to handle up to $4$ sets and we will use one less than the ambient dimension as an upper bound for the intrinsic dimension ($d=8$). According to Theorem \ref{THM:Global Dimension Theorem} we should select $p \ge 11.77$. In all of our experiments we set $p=15$ to give us a safety margin. We did some tests on one of our motion segmentation databases (outlier-free RAS) to see how sensitive the minimizer of global dimension is to $p$ in practice. We found that values as low as $p=10$ result in nearly identical performance to $p=15$, and we don't start to see significant degradation in results in the other direction until $p=25$.
\begin{table}[h]
\caption{Values of the RHS of~\eqref{eq:p_lower_bound} for various values of $K$ and $d$. Theorem \ref{THM:Global Dimension Theorem} ensures that $\Pi_{Nat}$ is the unique minimizer of $G_p$ when $p$ is larger than these values}
\centering
\tabcolsep=0.33cm
\begin{tabular}{ccc|c|c|c|c|}
\cline{3-7}
 & & \multicolumn{5}{|c|}{$d$}\\ \cline{3-7}
 & & \multicolumn{1}{|c|}{8} & 7 & 6 & 5 & 4\\ \hline
\multicolumn{1}{|c|}{\multirow{3}{*}{$K$}} & \multicolumn{1}{c|}{2} & 5.89 & 5.19 & 4.50 & 3.80 & 3.11 \\ \cline{2-7}
\multicolumn{1}{|c|}{} & \multicolumn{1}{c|}{3} & 9.33 & 8.23 & 7.13 & 6.03 & 4.92 \\ \cline{2-7}
\multicolumn{1}{|c|}{} & \multicolumn{1}{c|}{4} & 11.77 & 10.38 & 8.99 & 7.60 & 6.21 \\ \hline
\end{tabular}
\label{Table:Min p}
\end{table}

\section{A Fast Algorithm for Minimizing Global Dimension}
\label{sec:A Variational Algorithm}
Global dimension is defined on the set of partitions of a data set. With a discrete domain, finding ways of quickly minimizing the objective function is non-trivial. In this section we briefly introduce a method, which we will call \emph{Global Dimension Minimization} (GDM) for doing exactly this.

GDM is based on the gradient projection method~\cite[\S2.3]{bertsekas1995nonlinear}. In order to apply a gradient-based method, we need to re-formulate the problem so that we have a smooth objective function over a convex domain. To do this we employ the notion of \emph{fuzzy assignment}. Rather than trying to assign each data point a label, identifying it with a single cluster, we allow each point to be associated with every cluster simultaneously, in varying amounts. Specifically, we assign each data point $\bv_j$ a probability vector where the $i$'th coordinate holds the strength of $\bv_j$'s affiliation with cluster $i$. Assuming we have a data set of $N$ points in $\R^D$, and we seek $K$ clusters, we need $N$ probability vectors of length $K$ to encode the \emph{soft partition} of the data. This membership information will be stored in a \emph{membership matrix}, $\M$, where each column is a probability vector. Element $(i,j)$ of the matrix $\M$ holds the strength of $\bv_j$'s affiliation with cluster $i$.

The next step is to extend the definition of global dimension so that it is defined on soft partitions in a meaningful way. In its original formulation, to evaluate the global dimension of a partition, we would break up the data set into parts, based on the partition, and estimate the dimension of each part using empirical dimension. To extend this to soft partitions, we estimate the dimension of the $k$'th set in a partition by scaling each data point by its respective affiliation strength to set $k$ ($\bv_n$ is multiplied by $\M_{(k,n)}$). We then use empirical dimension to estimate the dimension of the scaled set. In essence, each point is now included in each dimension estimate. However, if a point is scaled so that it lays near the origin when considering a given set, it has little impact on the estimated dimension of that set. In fact, if we look at the global dimension of a soft partition that assigns each data point entirely to a single set ($\M$ has only 1's and 0's in it), then the global dimension of that soft partition, using our new definition, agrees with the global dimension of the corresponding ``hard partition'', using our original definition. Thus, this change is a reasonable extension of the original definition to soft partitions. Our extended definition of global dimension is:
\begin{align}
\label{Def: Fuzzy Global Dimension}
&GD = \Vert (\hat{d}_\epsilon^1, \hat{d}_\epsilon^2, ..., \hat{d}_\epsilon^K) \Vert_p \\
&\notag \text{where } \ \hat{d}_\epsilon^k = \hat{d}_\epsilon(\M_{(k,1)}\bv_1,\M_{(k,2)}\bv_2,...,\M_{(k,N)}\bv_N).
\end{align}
With this modified formulation, global dimension is an almost-everywhere differentiable function defined over the Cartesian product of $N$ $K$-dimensional probability simplexes. One can check that this is a convex domain (the product of convex sets is convex). A natural approach to minimizing a problem of this sort is the gradient projection method~\cite[\S2.3]{bertsekas1995nonlinear}. In this method, we begin at some initial state, compute the gradient of the objective function, take a step in the direction opposite the gradient, and then project our new state back into the domain of optimization. This is repeated until our state converges.

The gradient of global dimension can be computed, but we need some notation first. For $i=1, \ldots, K$, we denote by $\A_k$ the $D$-by-$N$ matrix whose $j$'th column equals $\M_{(k,j)} \bv_j$ for $j=1,2,...,N$ (i.e., $\A_k$ is the data matrix scaled according to weights for cluster $k$). Let $\A_k = \U_k \bSigma_k (\V_k)^T$ be the thin SVD of $\A_k$, and $\bsigma_k$ denote the vector of elements from the diagonal of $\bSigma_k$. Let $\delta = \epsilon / (1 - \epsilon)$. Define
{\small
\begin{equation}
\label{eq:prelim_DG}
\D_k = \left( {{\Vert \bsigma_k \Vert_{\epsilon}^{1-\epsilon} \Vert \bsigma_k \Vert_{\delta}} \over {\Vert \bsigma_k \Vert_{\delta}^2}} \right) \cdot (\bSigma_k)^{\epsilon - 1}
-
\left( {{\Vert \bsigma_k \Vert_{\epsilon} \Vert \bsigma_k \Vert_{\delta}^{1-\delta}} \over {\Vert \bsigma_k \Vert_{\delta}^2}} \right) \cdot
(\bSigma_k)^{\delta - 1}.
\end{equation}
}
\begin{theorem} \label{thm: Derivative of GD} The derivative of global dimension w.r.t.~an arbitrary element of the membership matrix $\M$ is given by:
{\small
\begin{equation}
\label{thm:DG}
{{\partial GD} \over {\partial \M_{(k,n)}}} =  {\V_k}_{{(n,:)}} \left( (\hat{d}_\epsilon^k)^{p-1} \,  \Vert (\hat{d}_\epsilon^1, \hat{d}_\epsilon^2, ..., \hat{d}_\epsilon^K) \,  \Vert_p^{1-p}  \, \D_k \, (\U_k)^T \right) \A_{(:,n)}.
\end{equation}
}
\end{theorem}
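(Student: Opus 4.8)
The plan is to apply the chain rule in three nested stages, peeling off one layer of the composition at a time: the outer $p$-norm, the middle empirical-dimension quotient, and the inner dependence of the singular values on the matrix entries. Throughout I would work at a point where each scaled matrix $\A_k$ has distinct, nonzero singular values, so that the thin SVD $\A_k = \U_k \bSigma_k (\V_k)^T$ varies smoothly and the classical outer-product formula for the derivative of a singular value applies. This is exactly the ``almost everywhere'' differentiability noted above; the exceptional set where singular values coincide or vanish has measure zero and can be set aside.

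First stage (outer $p$-norm): since $\M_{(k,n)}$ scales only the data assigned to cluster $k$, it enters $GD$ solely through $\hat{d}_\epsilon^k$. Differentiating $GD = \left(\sum_j (\hat{d}_\epsilon^j)^p\right)^{1/p}$ and simplifying produces the scalar prefactor $(\hat{d}_\epsilon^k)^{p-1}\,\Vert(\hat{d}_\epsilon^1,\ldots,\hat{d}_\epsilon^K)\Vert_p^{1-p}$ multiplying $\partial \hat{d}_\epsilon^k / \partial \M_{(k,n)}$, which already reproduces the two leading scalar factors of \eqref{thm:DG}. Second stage (quotient of quasi-norms): I would write $\hat{d}_\epsilon^k = \Vert\bsigma_k\Vert_\epsilon / \Vert\bsigma_k\Vert_\delta$ with $\delta = \epsilon/(1-\epsilon)$ and apply the quotient rule. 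The key subroutine is the derivative of a single quasi-norm $\Vert\bsigma_k\Vert_q = \left(\sum_i \sigma_{k,i}^q\right)^{1/q}$ for $q \in \{\epsilon,\delta\}$, which by the ordinary chain rule equals $\Vert\bsigma_k\Vert_q^{1-q}\sum_i \sigma_{k,i}^{q-1}\,\partial\sigma_{k,i}/\partial\M_{(k,n)}$. This reduces everything to the singular-value derivatives.

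The third stage is where I expect the main obstacle, and it is the crux of the argument. Here I would invoke the standard perturbation result that, under simple nonzero singular values, $\partial\sigma_{k,i}/\partial(\A_k)_{ab} = (\U_k)_{ai}(\V_k)_{bi}$; that is, the gradient of $\sigma_{k,i}$ with respect to the matrix entries is the rank-one outer product $\bu_{k,i}\,\bv_{k,i}^T$ of its left and right singular vectors. I would compose this with the elementary fact that $\partial\A_k/\partial\M_{(k,n)} = \bv_n\,\boldsymbol{e}_n^T$ (only the $n$-th column $\M_{(k,n)}\bv_n$ moves, with derivative $\bv_n = \A_{(:,n)}$), giving $\partial\sigma_{k,i}/\partial\M_{(k,n)} = (\V_k)_{ni}\,(\U_k)_{(:,i)}^T\bv_n$. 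Substituting this back, each quasi-norm derivative assembles into the matrix form $\Vert\bsigma_k\Vert_q^{1-q}\,(\V_k)_{(n,:)}(\bSigma_k)^{q-1}(\U_k)^T\bv_n$, with common left factor $(\V_k)_{(n,:)}$ and common right factor $(\U_k)^T\A_{(:,n)}$ that will factor out cleanly at the end.

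Finally I would collect the $q=\epsilon$ and $q=\delta$ pieces inside the quotient rule: pulling $(\V_k)_{(n,:)}$ to the left and $(\U_k)^T\A_{(:,n)}$ to the right, the remaining scalar coefficients regroup into precisely the bracketed difference defining $\D_k$ in \eqref{eq:prelim_DG}, so that $\partial\hat{d}_\epsilon^k/\partial\M_{(k,n)} = (\V_k)_{(n,:)}\,\D_k\,(\U_k)^T\A_{(:,n)}$. Multiplying by the first-stage prefactor then yields \eqref{thm:DG}. The delicate point to watch is the third stage: the singular-value derivative formula requires the singular values to be simple, so I would be careful to note that the identity holds off the measure-zero set of coincident or vanishing singular values, which is consistent with the claimed almost-everywhere differentiability and harmless for the gradient-projection scheme.
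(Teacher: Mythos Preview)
Your proposal is correct and follows essentially the same route as the paper: both proofs apply the chain rule through the outer $p$-norm, then the quotient rule for $\hat{d}_\epsilon^k = \Vert\bsigma_k\Vert_\epsilon / \Vert\bsigma_k\Vert_\delta$, and finally invoke the singular-value derivative identity $\partial\sigma_{k,i}/\partial(\A_k)_{ab} = (\U_k)_{ai}(\V_k)_{bi}$ (which the paper cites from \cite{PapadopouloSVD}) combined with $\partial\A_k/\partial\M_{(k,n)} = \bv_n\,\boldsymbol{e}_n^T$ to assemble the matrix form. The only cosmetic difference is that the paper carries the full sum $\sum_i \partial GD/\partial\hat{d}_\epsilon^i \cdot \partial\hat{d}_\epsilon^i/\partial m_k^n$ and collapses it via a Kronecker $\delta_{ik}$, whereas you observe up front that $\M_{(k,n)}$ enters only through $\hat{d}_\epsilon^k$; your explicit remark about requiring simple nonzero singular values is a welcome technical caveat that the paper leaves implicit.
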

A proof of Theorem \ref{thm: Derivative of GD} is included in the appendix.
This theorem allows us to evaluate the gradient vector of global dimension. As was mentioned before, in an iteration of the gradient projection method we take a step in the direction opposite the gradient. Computing a good step size is frequently a challenging task, but here we are fortunate. Our domain has a meaningful natural scale, since it is formed as a product of probability simplexes. Intuitively, our step size should be large enough to move us across the entire space in a reasonable number of steps, but small enough that any individual membership vector can move only a fraction of the way across its own simplex in one step. In practice, we scale each step so that the membership vectors most affected by the step move a distance of .3 on average. This seems to work well in general.

Finally, one can check that projecting onto the domain of optimization can be accomplished by individually projecting each column of $\M$ onto the standard $K$-dimensional probability simplex.

We have outlined a projected gradient descent method for minimizing global dimension. The above method forms the core of the GDM algorithm. However, since the global dimension function is non-convex (and hence may contain multiple local minimums), it is important to achieve reasonably good initialization. Our initialization strategy is inspired by ALC~\cite{Ma07Compression}. We start with a ``trivial'' partition where each point is in its own set, and we randomly select many pairs of sets in the partition. For each pair, we hypothetically merge the two sets and measure the resulting global dimension. We select the pair that results in the lowest global dimension when merged and we effect that merge. We then repeat the process iteratively until we have the desired number of sets in our partition (in each step the number of sets in the partition decreases by $1$). After initialization, the projected gradient descent algorithm is run until convergence (or for a fixed, but large number of iterations). Thresholding is performed to recover a ``hard partition'' from our soft partition (point $j$ is assigned to cluster $i$ if $\M_{(i,j)}$ is the largest element from column $j$ of $\M$). After this is done, we perform a final genetic stage to clean up small errors which may have occurred in any of the previous stages. This is done by taking each point and hypothetically re-assigning it to each different cluster (while all other point assignments are kept fixed) and retaining the assignment that results in the lowest global dimension. This is repeated a few times or until no single-point re-assignments reduce global dimension. This primarily helps with placing points that lie near the intersections of different subspaces (their fuzzy assignments may associate them almost equally to $2$ different subspaces, making them difficult to place). Finally, we run this entire process several times and return the best partition of all runs (as measured by global dimension).

\begin{algorithm}[ht]
\label{Algorithm}
\caption{GDM Algorithm for HLM}
\begin{algorithmic}
\REQUIRE $X=\{\bx_1,\bx_2,\cdots,\bx_N\} \subseteq \R^D$: data, $K$: number of clusters, $p$: global dimension parameter, $\epsilon$: empirical dimension parameter, $n_1$, $n_2$, $n_3$: number of iterations (default: $n_1=n_3=10$, $n_2=30$)
\ENSURE  A partition, $\Pi$, of $X$ into $K$ disjoint clusters
\FOR {$i=1:n_1$}
     \item $\bullet$ $\Pi:=$ Partition of $X$ where each point is in its own set.
     \WHILE {number of sets in $\Pi$ greater than $K$}
     	\item $\bullet$ Randomly choose several pairs of sets.
     	\item $\bullet$ For each pair, measure the effect on global dimension if the pair is merged.
     	\item $\bullet$ Merge the pair of sets which results in the lowest global dimension.
     \ENDWHILE
     \item $\bullet$ Convert $\Pi$ to a soft partition, encoded in membership matrix $M$.
     \FOR {$j=1:n_2$}
     	\item $\bullet$ Compute gradient of global dimension, $\nabla GD$.
     	\item $\bullet$ $\rho$ = average magnitude of largest 10\% of columns of $\nabla GD$.
     	\item $\bullet$ Take a step in direction $-1 * \nabla GD$ of length $.3/\rho$.
     	\item $\bullet$ Project each column of $M$ onto the standard $k$-dimensional probability simplex.
     \ENDFOR
     \item $\bullet$ Convert $M$ back to a ``hard partition'', $\Pi$, by thresholding.
     \FOR {$j=1:n_3$}
     	\FOR {$n=1:N$}
     		\item $\bullet$ Check if re-assigning point $n$ to some other cluster decreases global dimension.
     		\item $\bullet$ If so, re-assign point $n$ to that cluster.
     	\ENDFOR
     \ENDFOR
\ENDFOR
\item $\bullet$ return partition from all runs with lowest global dimension.
\end{algorithmic}
\end{algorithm}

\subsection{Complexity of GDM}
A thorough analysis of the computational complexity is not included here; this is a short summary of the computational aspects involved. The main numerical component of GDM is computing $\nabla GD$. For a single iteration its complexity is $O(K \cdot N \cdot D^2)$. Our choice of $\rho$ requires a sorting procedure and is thus of order $O(N \cdot \log(N))$ operations for a single iteration. The initialization of the algorithm via ALC-type procedure~\cite{Ma07Compression} requires $O(n_1 \cdot N \cdot \log(N) \cdot D^2)$ operations. Also, the last genetic step has the following complexity $O(n_1 \cdot n_3 \cdot K \cdot D^2 \cdot N^2)$ (without taking advantage of incremental SVD). In theory, we can make the algorithm linear in the number of points $N$, by randomly initializing it, removing the genetic ``clean-up'' step and changing how we select our step size\footnote{This is assuming that we will not require more iterations to get close enough to the minimum that we can apply thresholding. In our experiments the number of needed iterations does not appear to grow with $N$, but we do not have any results to guarantee this.}. We have good numerical evidence, even with large $N$, that this can result in good accuracy and speed for artificial data. Regardless, for the values of $N$ in our application the algorithm is sufficiently fast and these additional steps help improving accuracy, especially for points which are nearby several clusters (whose percentage is not negligible when $N$ is small).

\section{Detecting and Rejecting Outliers with GDM}
\label{sec:Outlier Detection}
In practice, it turns out that the GDM algorithm described above is naturally robust to a small number of outliers (in that they do not tend to affect the classification of inliers), but no instruments were put in place for explicitly detecting or rejecting these outlying points. In this section, we introduce a modification to GDM that allows for explicit outlier detection and rejection. The guiding intuition is that an outlier has the property that if the true hybrid-linear structure is reflected in a partition, then no matter which group we assign the outlier to, it causes a significant increase in the empirical dimension of that group. This, in turn, results in a significant increase in global dimension. In other words, if we have a partition that reflects the true hybrid-linear structure of the data set, then there is no good place to put an outlier. If the algorithm was given the option of paying a fixed, low price for the right to ignore a given point, it would make sense for it to exercise this option on outliers, and only segment inliers.

We propose modifying the global dimension objective function, and the accompanying variational development in the following way:
\begin{equation}
\label{eqn:New Definition of GD}
\textrm{GD}(\M) = \alpha \Vert \M_{1,:} \Vert_1 + \Vert (\hat{d}_{\epsilon,2}, \hat{d}_{\epsilon,3}, ..., \hat{d}_{\epsilon,K+1})^T \Vert_p
\end{equation}
where:
\begin{equation}
\hat{d}_{\epsilon,k} = \hat{d}_\epsilon \left( \M_{k,1}\bv_1, \M_{k,2}\bv_2, ..., \M_{k,N}\bv_N \right).
\end{equation}

This modification adds an additional ``cluster'' to the problem (call it cluster 1), and we treat it differently than the others. Clusters $2$ through $K+1$ contribute to the global dimension in the same way that they did in the original development. Cluster $1$ contributes to the cost function the sum of the membership strengths of all data points to this cluster. This is the ``fuzzy assignment'' version of the following notion: we allow the algorithm to pay a fixed price, $\alpha$, for the right to ignore any particular data point (not assign it to any true cluster).

\subsection{Modification to GDM}
\label{subsec:Modification to GDM}
The proposed modification to the objective function only trivially changes the state space (now it is the product of $N$ $K+1$-dimensional probability simplexes, as opposed to $K$-dimensional simplexes). Thus, our method of projecting states onto the convex domain is effectively the same. The change to the objective function does mean that we must re-evaluate the gradient of global dimension. The computation is very similar to the unmodified version, and the result is:
\begin{equation}
\label{eqn:New Gradient}
{\partial GD \over \partial m_1^n} = \alpha m_1^n
\end{equation}
and, for all $k>1$
\begin{equation}
\label{eqn:New Gradient 2}
{\partial GD \over \partial m_k^n} = \V_{k(n,:)} \hat{d}_{\epsilon,k}^{p-1} \left( \hat{d}_{\epsilon,2}^p + ... + \hat{d}_{\epsilon,K+1}^p \right)^{{1 \over p}-1} \D_k \U_k^T \bv_n,
\end{equation}
where the notation and constants are as defined in \S\ref{sec:A Variational Algorithm}. Thus, the necessary modifications to GDM are:
\begin{enumerate}
\item Update the evaluation of the objective function $GD$ according to \eqref{eqn:New Definition of GD}.
\item Update the initialization of the state vector to include an outlier group.
\item Update the state projection routine to accommodate additional dimensions in domain.
\item Update the evaluation of $\nabla GD$ according to \ref{eqn:New Gradient} and \ref{eqn:New Gradient 2}.
\end{enumerate}

\subsection{Practical Implementations of Outlier Rejection}
\label{sec:Practical Methods of Outlier Detection}
We have described an idea for how to handle outliers, but it introduces a new parameter, $\alpha$. It is not immediately clear how one should choose this parameter, and how sensitive the results will be to it. In theory one would need to choose an outlier cost, $\alpha$, that is not so high that nothing is ever assigned to the outlier group, but not so low that large quantities of inliers are assigned to this group. The appropriate values would likely depend on multiple quantities, like intrinsic dimension, noise level, and distortion of the underlying subspaces. These are quantities that can vary not just between applications, but also from data set to data set for a single application. Applying the suggested modification exactly as proposed (and trying to ``tune'' this parameter) would therefore lead to an unreliable and unpredictable algorithm. We refer to this approach as GDM-Naive, and Figure \ref{fig: Triangles} illustrates why this method is unsound. Instead, we propose two variations of this method, which lead to more reliable solutions.

\begin{figure*}[t]
\captionsetup{margin=10pt,labelfont=bf}
	\centering
	\fbox{	
	\begin{minipage}[l]{.19\linewidth}
        \centering
        \includegraphics[width=\linewidth, clip=true, trim=0mm 0mm 0mm 0mm]{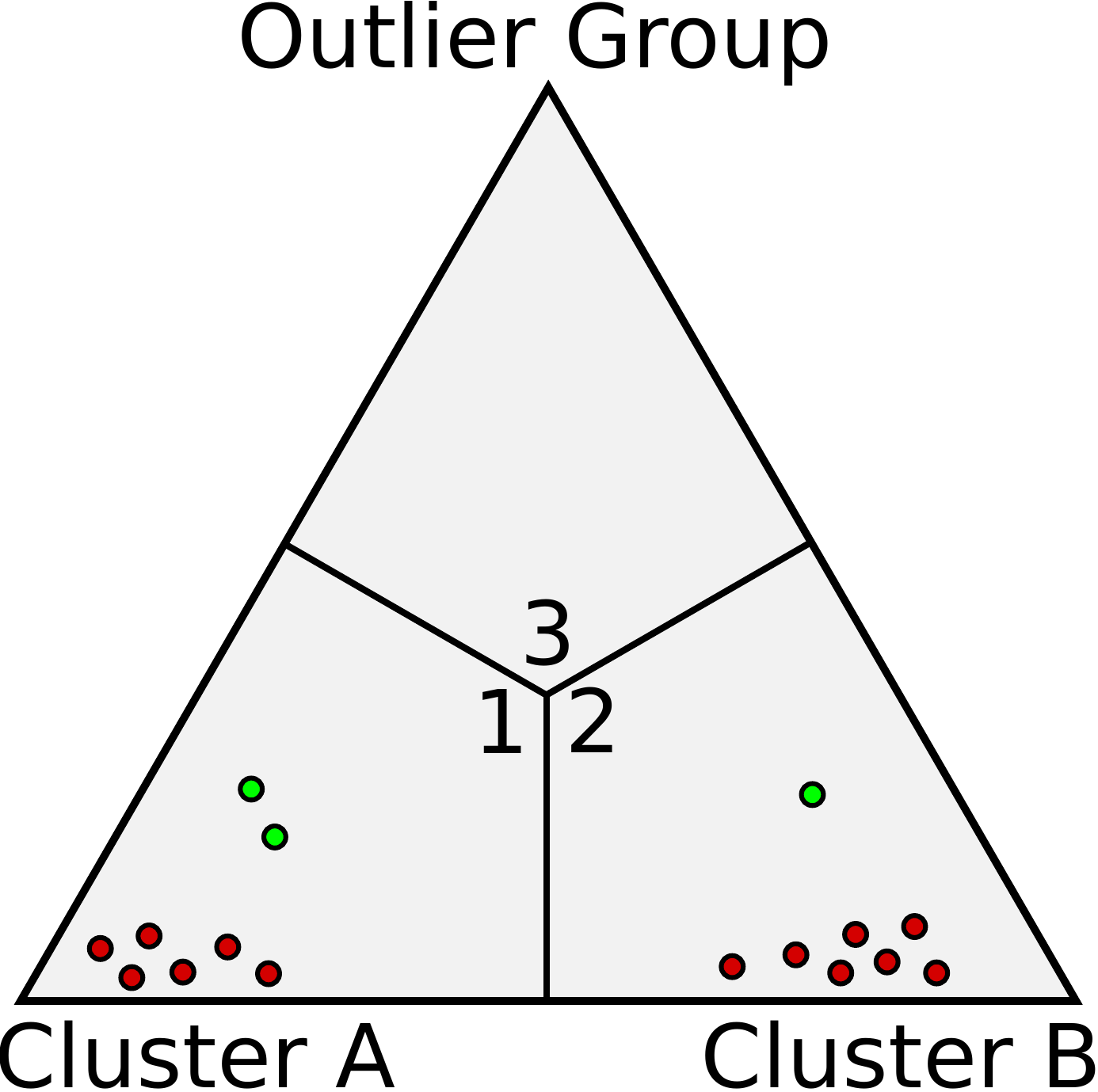}
        \caption*{Large $\alpha$: Nothing ends up in outlier group.\\}
    \end{minipage} \hspace{3mm}
    \begin{minipage}[l]{.19\linewidth}
        \centering
        \includegraphics[width=\linewidth, clip=true, trim=0mm 0mm 0mm 0mm]{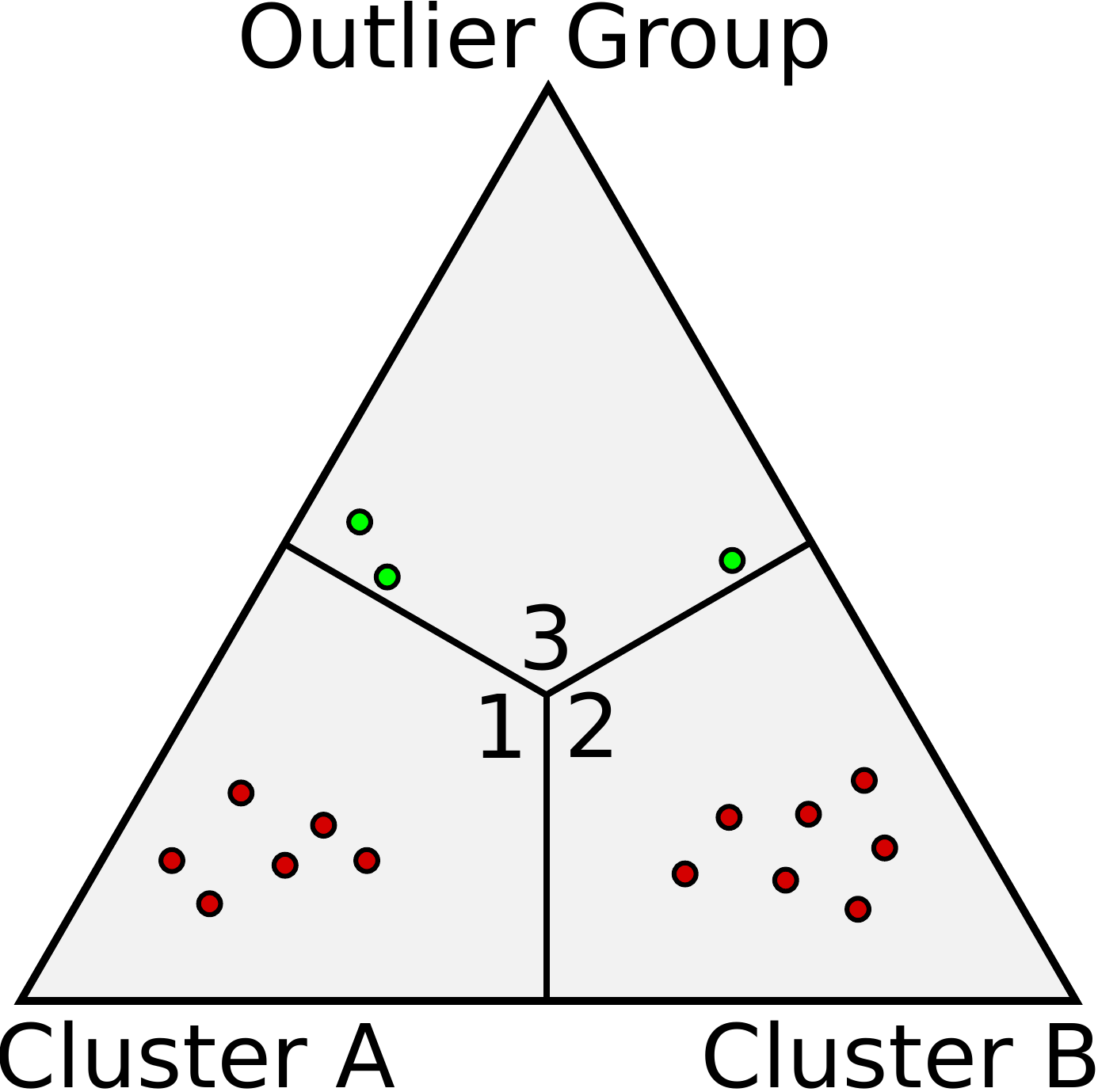}
        \caption*{Medium $\alpha$: Only outliers end up in outlier group.\\}
    \end{minipage} \hspace{3mm}
    \begin{minipage}[l]{.19\linewidth}
        \centering
        \includegraphics[width=\linewidth, clip=true, trim=0mm 0mm 0mm 0mm]{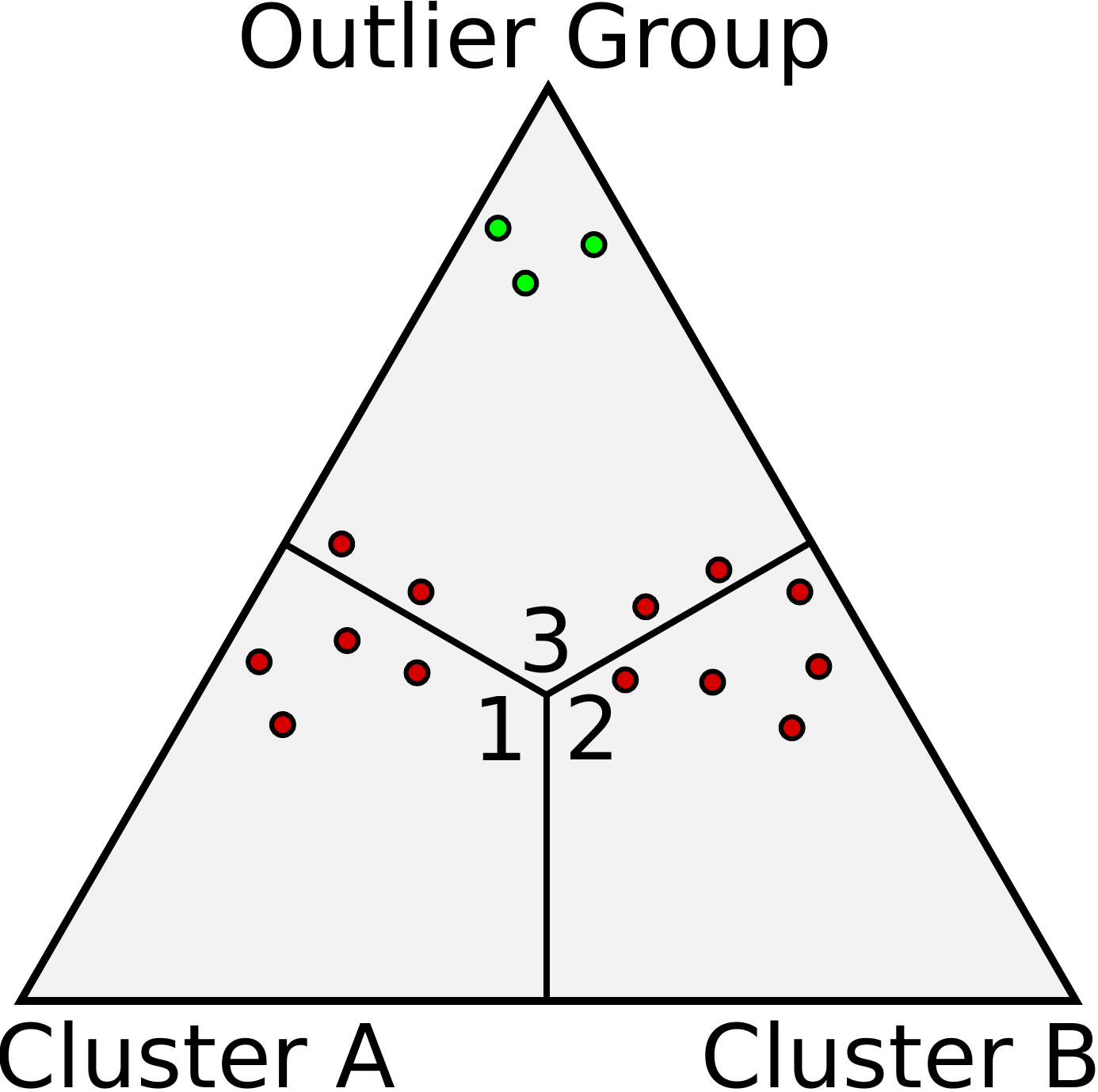}
        \caption*{Low $\alpha$: Some inliers end up in outlier group.\\}
    \end{minipage} \hspace{4mm}
    \begin{minipage}[l]{.34\linewidth}
        The three images here illustrate the problem with GDM-Naive. Each triangle represents the probability simplex containing the fuzzy assignment vectors for a fictitious data set. The fuzzy assignment for each point is plotted after many iterations of GDM. Points in red are inliers and points in green are outliers. The quantization regions (for the threshold step) are numbered 1-3. One can see that if $\alpha$ is not chosen correctly, points can be quantized into the wrong cluster. On the other hand, the outlier ranking of a point (the \# of points closer the outlier corner) is a more stable quantity. (Color figure online)
    \end{minipage}
    }
    \caption{Graphical depiction of the problem with GDM-Naive}
    \label{fig: Triangles}
\end{figure*}

\begin{enumerate}
\item \label{Method 1}{\bf GDM Known-Fraction:} Run the proposed algorithm with a fixed, low value of $\alpha$ (we use $\alpha=0.01$) but stop before the threshold step. Rank the data points according to their membership strengths to the outlier group. Remove a pre-set fraction of the data set (the part that most strongly affiliates with the outlier group). Continue with the classic (non-outlier version) of the variational algorithm on the surviving points only\footnote{We could skip this step and segment directly from the fuzzy assignment that we already have. Refining the membership matrix after removing the outliers is done to repair whatever damage the outliers may have done to the membership matrix before thresholding.} - this provides the inlier segmentation. The points that were removed are labelled outliers.
\item \label{Method 2}{\bf GDM Model-Reassign:} Run method \ref{Method 1} above (GDM Known-Fraction). Fit subspaces of appropriate dimension (round the empirical dimension) to each set in the resulting partition. Re-assign all points (including those that were decided to be outliers) according to their distances from each subspace. Call a point an outlier if it is more than some fixed distance, $\kappa$, from all of the subspaces.
\end{enumerate}

Each of the proposed methods handles the task of selecting $\alpha$, but introduces a new parameter. For method \ref{Method 1}, this is the percentage of the data set to throw out. For method \ref{Method 2}, the new parameter is the maximum distance a point can be from a subspace to be considered an inlier. Both of these parameters are more natural than selecting $\alpha$. In a noisy environment, one may have an idea, based on experiments, of what percentage of the data set will be outliers, or what the inlier modelling error tends to be. Additionally, when using the ``Model-Reassign'' method, one could find the average and variance of the residuals, $\mu$, and $\sigma^2$ respectively, when fitting subspaces to the inlier clusters. These quantities can be used to come up with a reasonable value of $\kappa$ for a given application ($\mu + r \sigma$ for some $r$). One could also find these values on a per-cluster basis and have a different outlier threshold for each cluster.

\section{Results on Real-World Data}
\label{sec:Results}
\subsection{Performance in the Absence of Outliers}
We tested the GDM algorithm on 2 motion segmentation databases. First, we used the outlier-free RAS database~\cite{rao_ijcv,AtevKSCC} and compared with many leading methods in 2-view segmentation. We noticed that some of the HLM methods performed better when using the linearly embedded point correspondences than with the nonlinear embedding. Therefore, in Table~\ref{tab:ras_trueK} we present each of the competing HLM algorithms twice. Where ``Linear'' appears, the algorithm was run on the feature trajectories in $\R^4$. Where ``Nonlinear'' appears, the algorithm was run on the Kronecker products (in $\R^9$) of the standard homogeneous coordinates of each feature correspondence. Figure \ref{fig:RAS_Figure} presents more details on the performance of the HLM methods with the nonlinear embedding, and Table \ref{Table:Runtimes for Clean RAS Comparison} gives the average runtimes of these methods. The other HLM methods we included are SCC~\cite{spectral_applied}, MAPA~\cite{mapa}, SSC~\cite{ssc09}, SLBF~\cite{LBF_journal12}, and LRR~\cite{lrr_short}. We also included two other successful methods for two-views (for which there was a code available online): RAS~\cite{rao_ijcv} and HOSC~\cite{higher-order}. Algorithm parameters and our experiment procedure are detailed in \S\ref{sec:Experiment Setup}.

\begin{figure}[h]
\captionsetup{margin=10pt,labelfont=bf}
\centering
\subfloat[SCC]{
	\includegraphics[width=.3\linewidth, clip=true, trim=12mm 12mm 12mm 12mm]{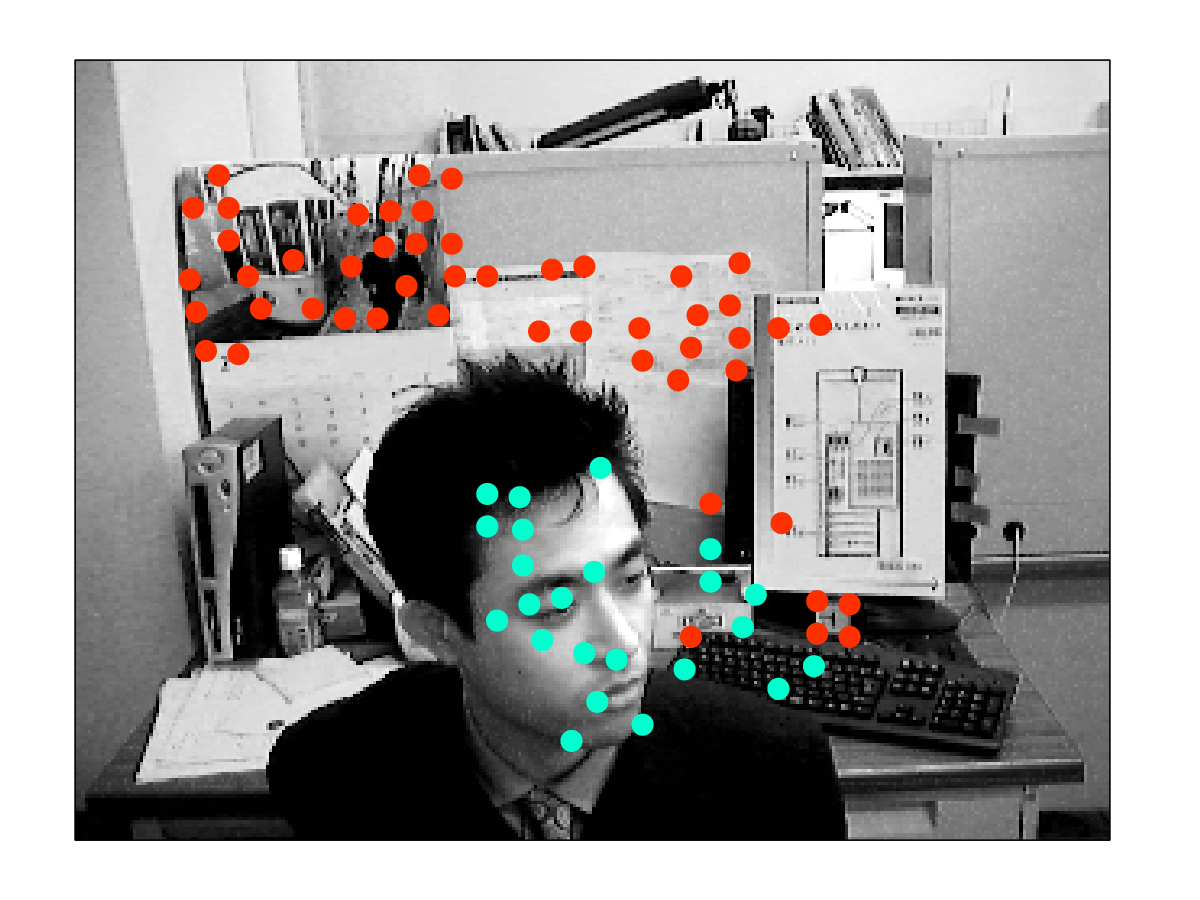}
}
\subfloat[GDM]{
	\includegraphics[width=.3\linewidth, clip=true, trim=12mm 12mm 12mm 12mm]{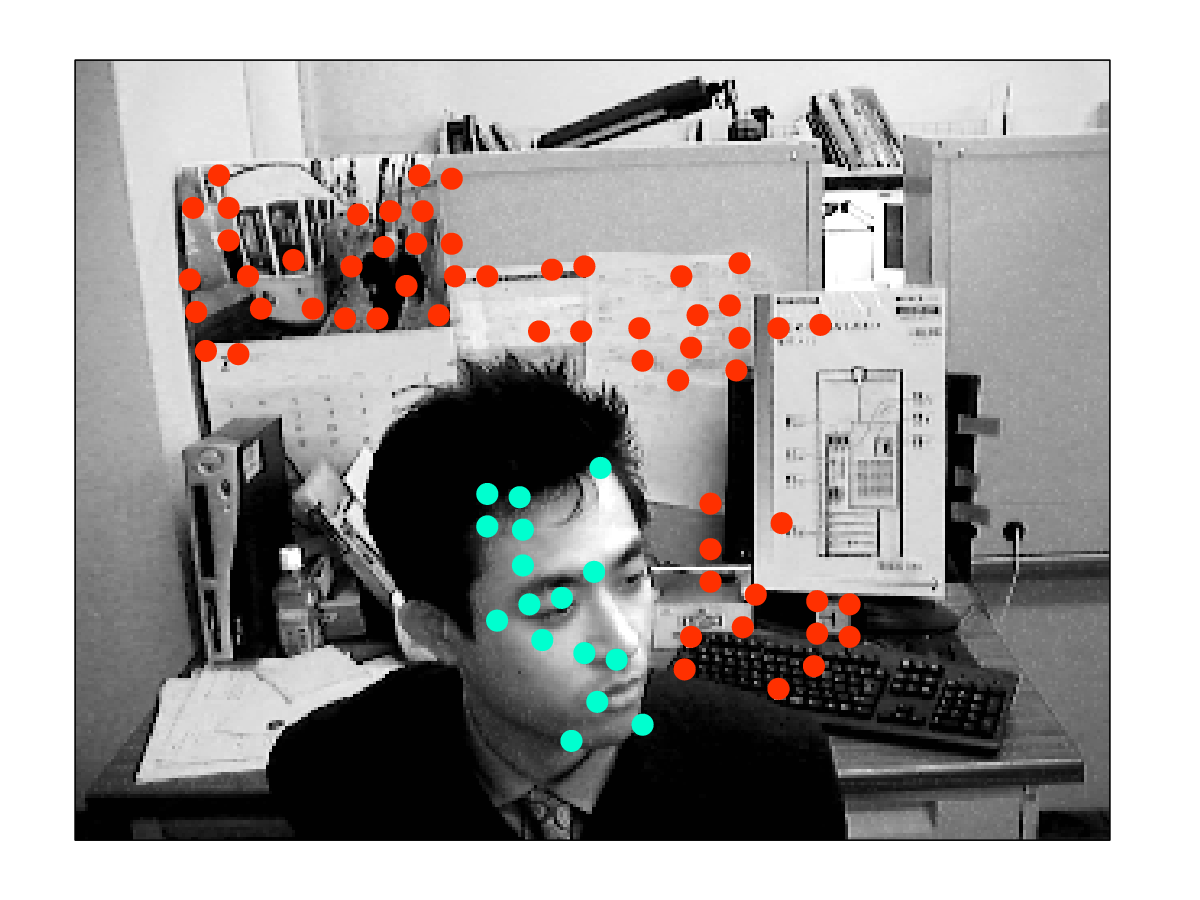}
}
\subfloat[MAPA]{
	\includegraphics[width=.3\linewidth, clip=true, trim=12mm 12mm 12mm 12mm]{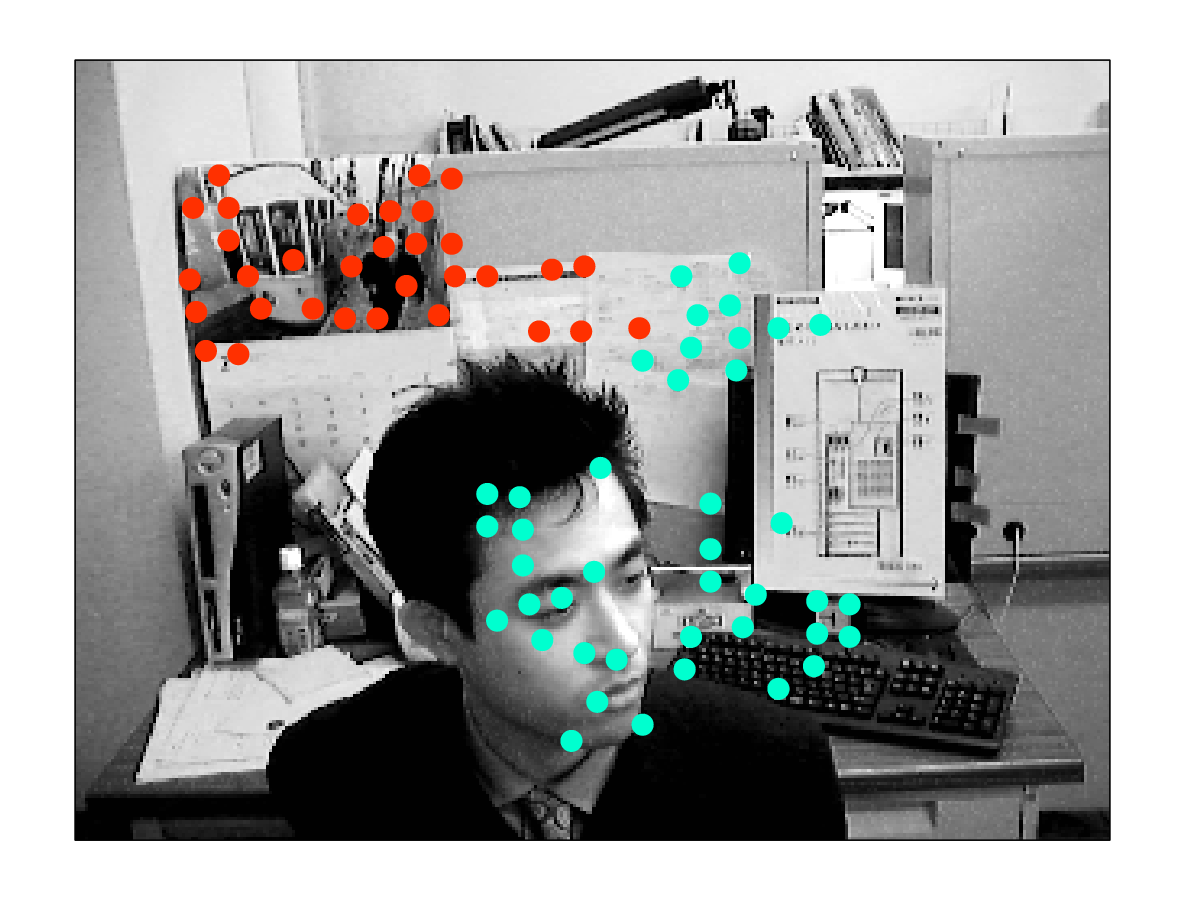}
}
\caption{\label{fig:RAS_Demonstration}Clustering by SCC, GDM, and MAPA on file 6 of the outlier-free RAS database. (Color figure online)}
\end{figure}

\begin{table*}[htb]
	\captionsetup{margin=10pt,font=small,labelfont=bf}
	\caption{\label{tab:ras_trueK} Misclassification Rates (given as \% Error) on the outlier-free RAS database.}
	\centering
	\scriptsize{
	\tabcolsep=0.17cm
	\begin{tabular}{ l  l | c c c c c c c c c c c c c | c | c |}
	\cline{3-17}
	& & \multicolumn{13}{|c|}{\textbf{File Number}} & \multicolumn{1}{|c|}{\multirow{2}{*}{\bf{Average}}} & \multicolumn{1}{|c|}{\bf{Average}} \\
	\cline{3-15}
	& & \multicolumn{1}{|c|}{1}  & \multicolumn{1}{|c|}{2}  & \multicolumn{1}{|c|}{3}  & \multicolumn{1}{|c|}{4} & \multicolumn{1}{|c|}{5}
	  & \multicolumn{1}{|c|}{6}  & \multicolumn{1}{|c|}{7}  & \multicolumn{1}{|c|}{8}  & \multicolumn{1}{|c|}{9} & \multicolumn{1}{|c|}{10}
	  & \multicolumn{1}{|c|}{11} & \multicolumn{1}{|c|}{12} & \multicolumn{1}{|c|}{13} & \multicolumn{1}{|c|}{}  & \multicolumn{1}{|c|}{\bf{w/o File \#8}} \\
	\hline
	\multicolumn{1}{|c|}{\multirow{15}{*}{\begin{sideways}\textbf{Method/Embedding}\end{sideways}}}
	                         & \multicolumn{1}{|l|}{GDM Nonlinear} & \textbf{  0.85} & \textbf{  0.00} &   1.57 &   0.65 & \textbf{  0.00} & \textbf{  0.00} & \textbf{  0.00} &  12.76 & \textbf{  0.00} & \textbf{  0.00} & \textbf{  0.00} & \textbf{  0.00} & \textbf{  0.00} &   1.22 & \textbf{  0.26}\\ \cline{2-17}
	\multicolumn{1}{|c|}{}   & \multicolumn{1}{|l|}{SCC Linear} & \textbf{  0.85} & \textbf{  0.00} &   1.18 &   0.65 & \textbf{  0.00} &   1.37 & \textbf{  0.00} &   1.42 &   0.39 & \textbf{  0.00} & \textbf{  0.00} &   1.01 & \textbf{  0.00} & \textbf{  0.53} &   0.45\\ \cline{2-17}
	\multicolumn{1}{|c|}{}   & \multicolumn{1}{|l|}{SCC Nonlinear} & \textbf{  0.85} & \textbf{  0.00} &  24.41 & \textbf{  0.00} & \textbf{  0.00} &  19.18 & \textbf{  0.00} & \textbf{  0.00} & \textbf{  0.00} &  13.97 &   5.36 &   0.84 &   1.10 &   5.05 &   5.48\\ \cline{2-17}
	\multicolumn{1}{|c|}{}   & \multicolumn{1}{|l|}{MAPA Linear} & \textbf{  0.85} &   3.65 &   1.18 &   0.65 & \textbf{  0.00} &  13.70 &  15.97 &   1.29 & \textbf{  0.00} & \textbf{  0.00} & \textbf{  0.00} &   0.67 &   3.30 &   3.17 &   3.33\\ \cline{2-17}
	\multicolumn{1}{|c|}{}   & \multicolumn{1}{|l|}{MAPA Nonlinear} & \textbf{  0.85} &  20.55 &  21.65 &   0.65 & \textbf{  0.00} &  21.92 &   6.25 &   7.73 & \textbf{  0.00} &  13.97 &   1.43 &   0.34 &   3.30 &   7.59 &   7.57\\ \cline{2-17}
	\multicolumn{1}{|c|}{}   & \multicolumn{1}{|l|}{SSC Linear} &   1.69 &  18.26 & \textbf{  0.79} &   1.94 & \textbf{  0.00} & \textbf{  0.00} &   6.25 &  32.22 & \textbf{  0.00} & \textbf{  0.00} &  14.64 &   1.35 &   4.40 &   6.27 &   4.11\\ \cline{2-17}
	\multicolumn{1}{|c|}{}   & \multicolumn{1}{|l|}{SSC Nonlinear} &   1.27 & \textbf{  0.00} &  22.44 &   0.65 & \textbf{  0.00} &  21.92 & \textbf{  0.00} &   9.02 & \textbf{  0.00} &  13.97 &   9.29 &  12.12 &   6.59 &   7.48 &   7.35\\ \cline{2-17}
	\multicolumn{1}{|c|}{}   & \multicolumn{1}{|l|}{SLBF Linear} & \textbf{  0.85} &   0.46 &   1.18 &   0.65 & \textbf{  0.00} & \textbf{  0.00} & \textbf{  0.00} &   0.26 & \textbf{  0.00} & \textbf{  0.00} & \textbf{  0.00} &   0.67 &   4.40 &   0.65 &   0.68\\ \cline{2-17}
	\multicolumn{1}{|c|}{}   & \multicolumn{1}{|l|}{SLBF Nonlinear} & \textbf{  0.85} & \textbf{  0.00} &   5.12 &   1.94 & \textbf{  0.00} &  19.18 & \textbf{  0.00} &  10.57 & \textbf{  0.00} &  13.97 & \textbf{  0.00} &   1.68 &  14.29 &   5.20 &   4.75\\ \cline{2-17}
	\multicolumn{1}{|c|}{}   & \multicolumn{1}{|l|}{LRR Linear} &   5.08 &  24.66 &   1.18 &   2.58 &   2.38 &   2.74 & \textbf{  0.00} &  29.12 & \textbf{  0.00} & \textbf{  0.00} &   8.93 &  14.81 &  18.68 &   8.47 &   6.75\\ \cline{2-17}
	\multicolumn{1}{|c|}{}   & \multicolumn{1}{|l|}{LRR Nonlinear} &   1.27 &   9.13 &   2.76 &   1.94 & \textbf{  0.00} & \textbf{  0.00} &   3.47 &   3.61 & \textbf{  0.00} & \textbf{  0.00} &   9.64 &  18.18 &   2.20 &   4.02 &   4.05\\ \cline{2-17}
	\multicolumn{1}{|c|}{}   & \multicolumn{1}{|l|}{RAS} &  11.65 & \textbf{  0.00} &   2.56 &   9.68 &  16.19 &  26.03 &  26.74 &  11.21 &   3.28 &  13.97 &   3.21 &   2.36 &   6.59 &  10.27 &  10.19\\ \cline{2-17}
	\multicolumn{1}{|c|}{}   & \multicolumn{1}{|l|}{HOSC d=2} & \textbf{  0.85} & \textbf{  0.00} &  24.41 &   1.61 & \textbf{  0.00} & \textbf{  0.00} & \textbf{  0.00} &  22.94 & \textbf{  0.00} & \textbf{  0.00} & \textbf{  0.00} & \textbf{  0.00} &   2.20 &   4.00 &   2.42\\ \cline{2-17}
	\multicolumn{1}{|c|}{}   & \multicolumn{1}{|l|}{HOSC d=3} &   1.27 &  23.74 &  24.41 &   3.23 & \textbf{  0.00} &  19.18 &  12.15 &  19.59 &  23.75 & \textbf{  0.00} &   1.43 &   1.01 &  17.58 &  11.33 &  10.65\\
     \hline
	\end{tabular}
	}
     \label{RASResults}
\end{table*}

\begin{figure}[h]
\captionsetup{margin=10pt,labelfont=bf}
\centering
\includegraphics[width=\linewidth, clip=true, trim=7mm 20mm 7mm 20mm]{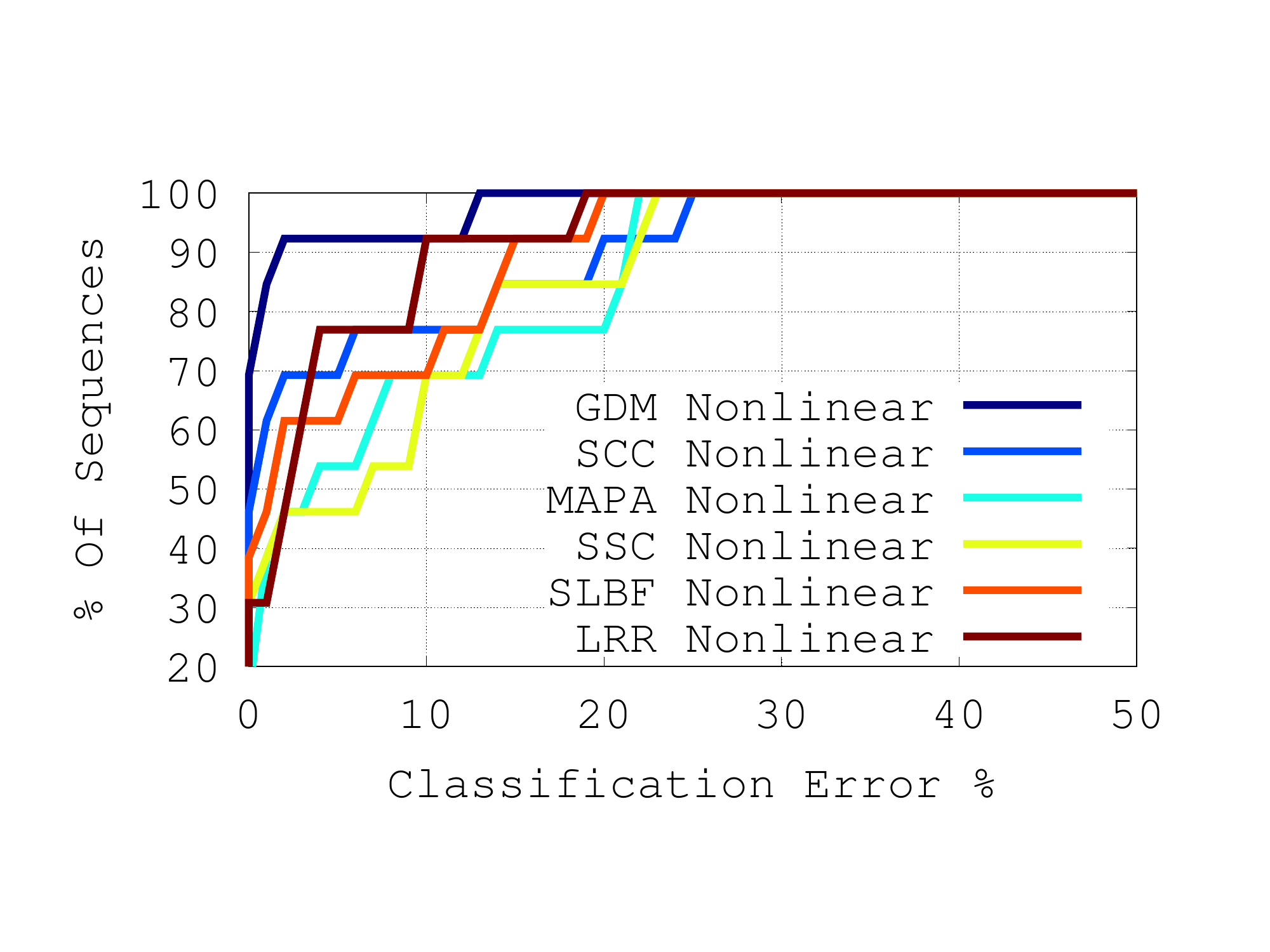}
\caption{\label{fig:RAS_Figure}GDM is compared against other HLM methods on the nonlinear 2-view embedding of the outlier-free RAS database. (Color figure online)}
\end{figure}

\begin{table}[htb]
	\captionsetup{margin=10pt,labelfont=bf}
	\caption{Average runtimes (per file) of HLM-based methods on non-linearly embedded (outlier-free) RAS data.}
	\centering
	\tabcolsep=0.20cm
	\begin{tabular}{ l l | c |}
	\cline{3-3}
	& & \multicolumn{1}{|c|}{Runtime (seconds)} \\ \hline
	\multicolumn{1}{|c|}{\multirow{6}{*}{\begin{sideways}\bf{Method}\end{sideways}}}
	                         & \multicolumn{1}{|l|}{GDM}  & 12.7 \\ \cline{2-3}
	\multicolumn{1}{|c|}{}   & \multicolumn{1}{|l|}{SCC}  & 2.3  \\ \cline{2-3}
	\multicolumn{1}{|c|}{}   & \multicolumn{1}{|l|}{MAPA} & 5.6  \\ \cline{2-3}
	\multicolumn{1}{|c|}{}   & \multicolumn{1}{|l|}{SSC}  & 89.5 \\ \cline{2-3}
    \multicolumn{1}{|c|}{}   & \multicolumn{1}{|l|}{SLBF} & 4.0  \\ \cline{2-3}
	\multicolumn{1}{|c|}{}   & \multicolumn{1}{|l|}{LRR}  & 0.8  \\ \cline{2-3}
	\hline
	\end{tabular}
   	\label{Table:Runtimes for Clean RAS Comparison}
\end{table}

From Table \ref{tab:ras_trueK} and Figure \ref{fig:RAS_Figure}, we can see that GDM performs very competitively on this database. There is only a single file ($\#8$) on which GDM exhibits significant error. This file contains features from two bent magazines as well as a rigid background. Since the bent magazines are clearly non-rigid, our model assumptions are not met (see Fig. \ref{fig:File 8}). There were two methods in the comparison that had a lower average misclassification error than GDM (``SCC Linear'' and ``SLBF Linear''). This is because they perform significantly better on file ($\#8$). Both of these are spectral methods, accompanied by the linear embedding, and are therefore better able to handle the manifold structure that results from the non-rigidity of the objects in this file. Amongst the other files however, GDM performs better on average than both of these two methods (see the last column of Table \ref{tab:ras_trueK}). Comparing just the HLM-based methods on the nonlinearly-embedded data, GDM performs better than any other method, with the most perfect classifications and the fewest number of files with significant errors. Figure \ref{fig:RAS_Figure} more clearly emphasizes this superb performance amongst methods using the nonlinear embedding.

\begin{figure}[h]
\captionsetup{margin=10pt,labelfont=bf}
\centering
\subfloat[Frame 1]{
	\includegraphics[width=.45\linewidth, clip=true, trim=12mm 9mm 9mm 11mm]{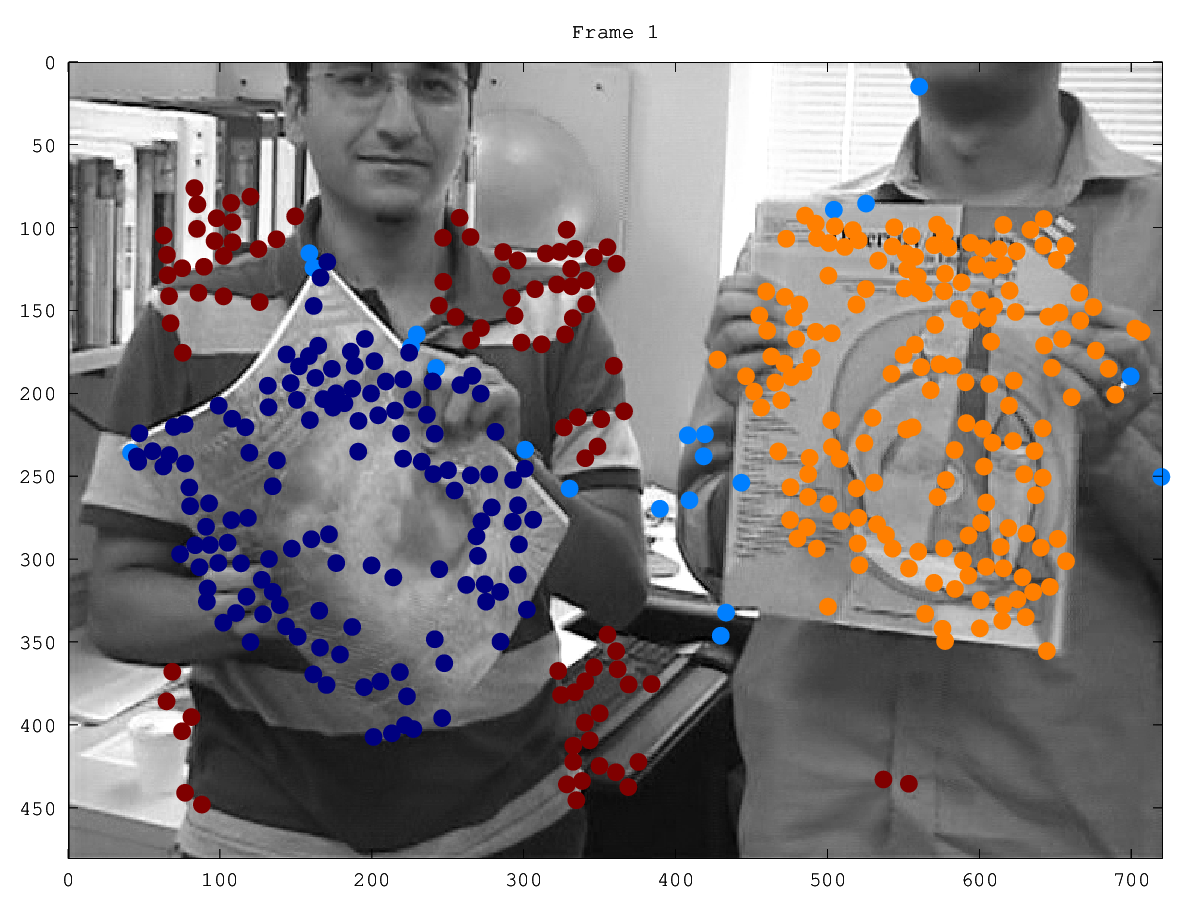}
}
\subfloat[Frame 2]{
	\includegraphics[width=.45\linewidth, clip=true, trim=12mm 9mm 9mm 11mm]{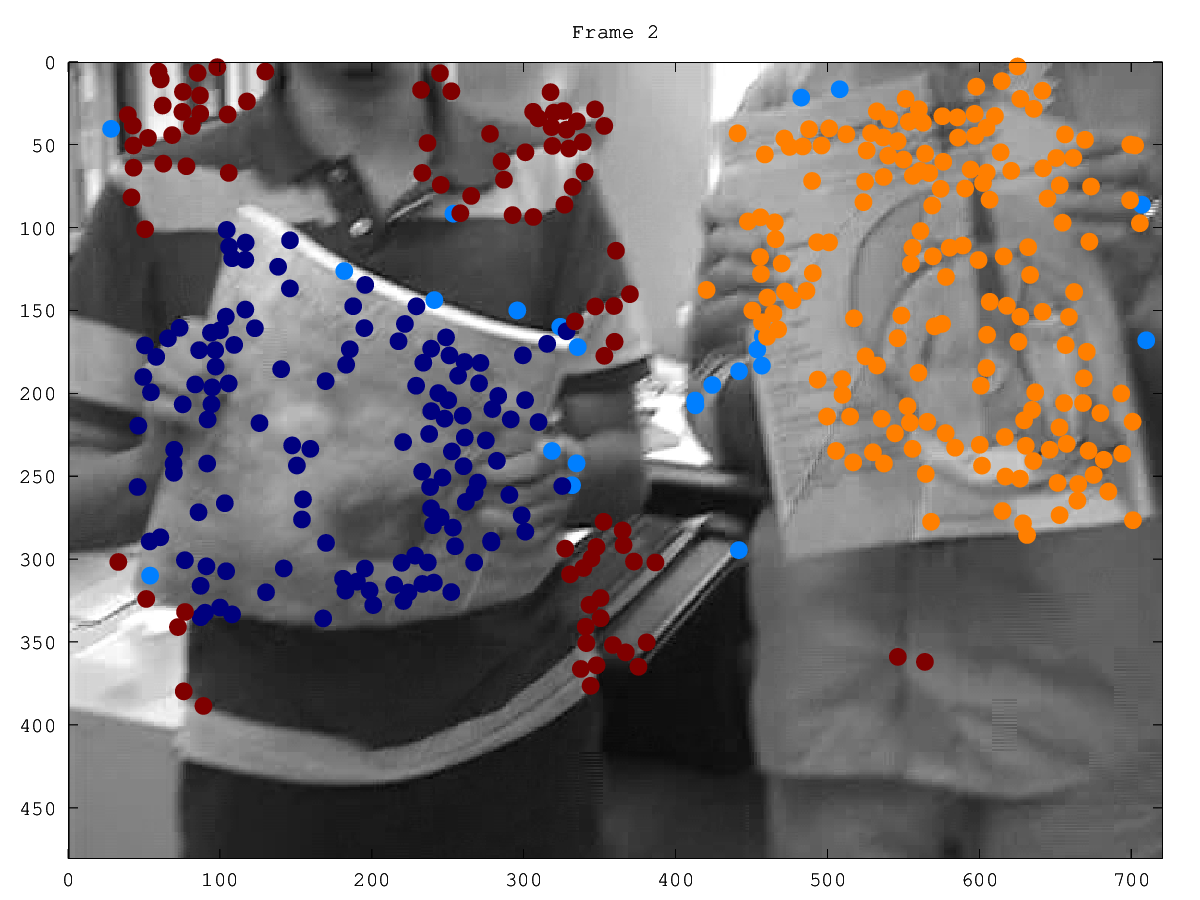}
}
\caption{\label{fig:File 8}File 8 in the RAS database. This is a problematic file because the two magazines in the scene appear to undergo a non-rigid transformation between the two frames. Point correspondences are colored according to ground-truth segmentation. (Color figure online)}
\end{figure}

We also performed experiments on the Hopkins155 database~\cite{Tron07abenchmark}. For 2-view segmentation we extracted the first and last frame of each sequence and performed 2-view segmentation on the nonlinear embedding (in $\R^9$) of the data. For comparison, we demonstrate the results of some other HLM algorithms on this embedded data: MAPA~\cite{mapa}, SCC-MS~\cite{spectral_applied,LBF_journal12} and SLBF-MS~\cite{LBF_journal12}. We also supply results for a few state-of-the-art HLM methods on the full $n$-view feature trajectories. For these $n$-view results we chose in this table the best methods on Hopkins155 we are aware of, which do not require careful tuning with parameters: SSC~\cite{ssc09} and SLBF-MS~\cite{LBF_journal12}. We also include the reference (REF) results~\cite{Tron07abenchmark}. REF finds the best linear models (via least squares approximation) for each cluster of embedded points (given the ground truth segmentation), and then finds new clusters by assigning points to the models they best agree with. For GDM on this database, it was necessary to increase the number of random initializations ($n_1$ in Algorithm 1) to achieve reliable convergence (we changed it from 10 to 30). From Table \ref{tab:hopkins_trueK} we see that GDM outperforms the other 2-view methods (although SCC matches or nearly matches its performance in some categories). We remark that we also tested a genetic algorithm for minimizing the global dimension and it achieved even more accurate results, however, we do not include it here since it is not as fast as GDM.

\begin{table*}[htb]
\captionsetup{margin=10pt,labelfont=bf}
\caption{\label{tab:hopkins_trueK} The mean and median percentage of misclassified points for two-motions and three-motions in Hopkins 155 database with comparisons to state-of-the-art $n$-views. Winning results amongst the 2-view methods are bold-faced in each category.}
\centering
\subfloat{
     \tabcolsep=0.35cm
     \rowcolors{5}{white}{lightgray}
     \begin{tabular}{c|l||c|c||c|c||c|c||c|c|}
          \cline{2-10}
          & & \multicolumn{2}{c||}{Checker} & \multicolumn{2}{c||}{Traffic} & \multicolumn{2}{c||}{Articulated} & \multicolumn{2}{c|}{All}\\
          \cline{3-10}
          & \raisebox{1.5ex}[0pt]{\normalsize{2-motion}} & Mean & Median & Mean & Median & Mean & Median & Mean & Median\\
          \hline
          \multicolumn{1}{|c|}{\multirow{4}{*}{\begin{sideways}\textbf{$2$-view}\end{sideways}}}
                                 & GDM              & \bf{2.79}  & \bf{0.00}  & \bf{1.78} & \bf{0.00} & \bf{2.66} & \bf{0.00} & \bf{2.51}  & \bf{0.00} \\
          \multicolumn{1}{|c|}{} & MAPA             & 12.85      & 14.07      & 6.49      & 6.93      & 7.15      & 5.33      & 10.69      & 10.03     \\
          \multicolumn{1}{|c|}{} & SCC (d=7)        & \bf{2.79}  & \bf{0.00}  & 1.97      & \bf{0.00} & 3.42      & \bf{0.00} & 2.64       & \bf{0.00} \\
          \multicolumn{1}{|c|}{} & SLBF (d=6)       & 8.18       & 1.39       & 3.98      & 0.53      & 4.73      & 0.40      & 6.78       & 1.11      \\
          \hline
          \multicolumn{1}{|c|}{\multirow{3}{*}{\begin{sideways}\textbf{$n$-view}\end{sideways}}}
                                 & SLBF-MS (2$F$,3) & 1.28 & 0.00 & 0.21 & 0.00 & 0.94 & 0.00 & 0.98 & 0.00 \\
          \multicolumn{1}{|c|}{} & SSC-N (4$K$,3)   & 1.29 & 0.00 & 0.29 & 0.00 & 0.97 & 0.00 & 1.00 & 0.00 \\
          \multicolumn{1}{|c|}{} & REF              & 2.76 & 0.49 & 0.30 & 0.00 & 1.71 & 0.00 & 2.03 & 0.00 \\
          \hline
     \end{tabular}
} \\
\subfloat{
     \tabcolsep=0.35cm
     \rowcolors{5}{white}{lightgray}
     \begin{tabular}{c|l||c|c||c|c||c|c||c|c|}
          \cline{2-10}
          & & \multicolumn{2}{c||}{Checker} & \multicolumn{2}{c||}{Traffic} & \multicolumn{2}{c||}{Articulated} & \multicolumn{2}{c|}{All}\\
          \cline{3-10}
          & \raisebox{1.5ex}[0pt]{\normalsize{3-motion}}  & Mean & Median & Mean & Median & Mean & Median & Mean & Median\\
          \hline
          \multicolumn{1}{|c|}{\multirow{4}{*}{\begin{sideways}\textbf{$2$-view}\end{sideways}}}
                                 & GDM              & \bf{5.37} & \bf{3.23} & \bf{4.23} & \bf{2.69} & 5.32      & 5.32      & \bf{5.14} & \bf{3.13} \\
          \multicolumn{1}{|c|}{} & MAPA             & 21.89     & 19.49     & 13.15     & 13.04     & 9.04      & 9.04      & 19.41     & 18.09     \\
          \multicolumn{1}{|c|}{} & SCC (d=7)        & 8.05      & 5.85      & 4.67      & 5.45      & 5.85      & 5.85      & 7.25      & 5.45      \\
          \multicolumn{1}{|c|}{} & SLBF (d=6)       & 14.08     & 12.80     & 7.93      & 6.75      & \bf{4.79} & \bf{4.79} & 12.32     & 9.57      \\
          \hline
          \multicolumn{1}{|c|}{\multirow{3}{*}{\begin{sideways}\textbf{$n$-view}\end{sideways}}}
                                 & SLBF-MS (2$F$,3) & 3.33  & 0.39 & 0.24 & 0.00 & 2.13 & 2.13 & 2.64 & 0.22 \\
          \multicolumn{1}{|c|}{} & SSC-N (4$K$,3)   & 3.22  & 0.29 & 0.53 & 0.00 & 2.13 & 2.13 & 2.62 & 0.22 \\
          \multicolumn{1}{|c|}{} & REF              & 6.28  & 5.06 & 1.30 & 0.00 & 2.66 & 2.66 & 5.08 & 2.40 \\
          \hline
     \end{tabular}
}
\end{table*}

It is also interesting to note that our results for 2-views are comparable to the reference results with $n$-views. That is, the results of GDM are the best one can expect with pure linear modeling given many views and assuming an affine camera model. GDM for $n$-views gave comparable results and we thus did not include it. On the other hand, both SLBF-MS and SSC-N are able to obtain better results with $n$-views and this may be because their machinery of spectral clustering (together with good choices of spectral weights) allows them to take into account some of the manifold structure and nearness of points (information beyond linear modeling).

\subsection{Performance in the Presence of Outliers}
We tested the methods suggested in \S\ref{sec:Practical Methods of Outlier Detection} on the outlier-corrupted RAS database~\cite{rao_ijcv}. The performance of classic GDM (no outlier rejection machinery) is also presented on this database, as is the performance of GDM on the corresponding outlier-free database (for comparison purposes). We also show results from three competing methods for segmenting motion with outliers: RAS~\cite{rao_ijcv}, HOSC~\cite{higher-order}, and LRR~\cite{lrr_short,lrr_long} with outlier rejection performed by identifying the largest columns of $\mathbf{E}$, as suggested in \cite[pg. 9]{lrr_long}. The details of this experiment, including parameter values, are given in \S\ref{sec:Experiment Setup}.

It is non-trivial to fairly compare different algorithms in the presence of outliers. Each method generally has at least one parameter for controlling how it handles outliers. This parameter balances the desire for a high outlier detection rate with a desire for a low false alarm rate (these two quantities are invariably correlated). Using any popular metric for evaluating segmentation accuracy (like misclassification rate for true inliers\footnote{``True inliers'' are points that are inliers according to ground truth.}), the performance of each algorithm will depend substantially on its outlier handling parameter. In general terms, if an algorithm is allowed to discard points as outliers more freely, then the accuracy on the surviving points will improve. Thus, if one method is more conservative than another in discarding points as outliers, the results will likely be skewed in favor of one method over the other. It is therefore important when looking at segmentation accuracy to think in terms of accuracy for a given \emph{true positive rate} (TPR) and \emph{false positive rate} (FPR):

\begin{align*}
\text{TPR} &= {{\text{\# of outliers that were identified as outliers}} \over {\text{\# of outliers in dataset}}} * 100,\\[0.25cm]
\text{FPR} &= {{\text{\# of inliers that were identified as outliers}} \over {\text{\# of inliers in dataset}}} * 100.\\
\end{align*}

There are two aspects of these algorithms we wish to compare. The first is outlier detection performance (how good is each method at distinguishing between inliers and outliers). The second is segmentation performance, where we evaluate how good each method is at segmenting motions in the presence of outliers.

To compare the outlier detection performance of multiple methods, a common tool is the ROC curve, which parametrically plots the TPR vs. FPR as a function of the outlier parameter for a method. A ``random classifier'' that randomly labels points as inliers or outliers will have an ROC curve lying along the line $\text{TPR}=\text{FPR}$. An ideal classifier will follow the line $\text{TPR}=1$. Hence, methods can be compared by seeing which ROC curve is highest over the broadest range of FPRs (or over the FPRs one is interested in). The ROC curves for GDM (using the Model-Reassign outlier detection method and varying $\kappa$), LRR (by varying $\lambda$), RAS (by varying ``outlierFraction''), and HOSC (by varying $\alpha$), are presented in Fig.~\ref{fig:ROC_Figure}.

\begin{table*}[htb]
	\captionsetup{margin=10pt,font=small,labelfont=bf}
	\caption{Misclassification Rates (given as \% Error) of inliers on the RAS database. All but `Classic GDM - clean' are misclassification rates when run on the outlier-corrupted datasets. `GDM - clean' gives the performance of the unmodified GDM algorithm, when run on the outlier-removed datasets (included as a reference).}
	\centering
	\scriptsize{
	\tabcolsep=0.15cm
	\begin{tabular}{ l  l | c c c c c c c c c c c c c | c | c |}
	\cline{3-17}
	& & \multicolumn{13}{|c|}{\bf{File Number}} & \multicolumn{1}{|c|}{\multirow{2}{*}{\bf{Average}}} & \multicolumn{1}{|c|}{\bf{Average}} \\
	\cline{3-15}
	& & \multicolumn{1}{|c|}{1}  & \multicolumn{1}{|c|}{2}  & \multicolumn{1}{|c|}{3}  & \multicolumn{1}{|c|}{4} & \multicolumn{1}{|c|}{5}
	  & \multicolumn{1}{|c|}{6}  & \multicolumn{1}{|c|}{7}  & \multicolumn{1}{|c|}{8}  & \multicolumn{1}{|c|}{9} & \multicolumn{1}{|c|}{10}
	  & \multicolumn{1}{|c|}{11} & \multicolumn{1}{|c|}{12} & \multicolumn{1}{|c|}{13} & \multicolumn{1}{|c|}{}  & \multicolumn{1}{|c|}{\bf{w/o File \#8}} \\
	\hline
	\multicolumn{1}{|c|}{\multirow{6}{*}{\begin{sideways}\bf{Method}\end{sideways}}}
	                         & \multicolumn{1}{|l|}{GDM - Model-Reassign} &   2.97 & \textbf{  0.00} &   4.33 & \textbf{  1.29} &   0.95 & \textbf{  0.00} & \textbf{  0.00} &  12.63 & \textbf{  0.00} &   6.62 & \textbf{  0.00} & \textbf{  2.02} &  17.58 & \textbf{  3.72} & \textbf{  2.98}\\ \cline{2-17}
	\multicolumn{1}{|c|}{}   & \multicolumn{1}{|l|}{GDM - Classic} & \textbf{  0.85} & \textbf{  0.00} & \textbf{  1.57} &  32.26 & \textbf{  0.00} & \textbf{  0.00} & \textbf{  0.00} &  22.94 & \textbf{  0.00} & \textbf{  0.00} &  22.14 &   8.75 &  16.48 &   8.08 &   6.84\\ \cline{2-17}
	\multicolumn{1}{|c|}{}   & \multicolumn{1}{|l|}{RAS} &  19.49 &   5.02 &   1.97 &   5.81 &  15.71 &  23.29 &  25.00 &  11.86 &   2.32 &  13.97 &  12.14 &  18.18 &  21.98 &  13.60 &  13.74\\ \cline{2-17}
	\multicolumn{1}{|c|}{}   & \multicolumn{1}{|l|}{LRR} &   4.24 &  20.55 &  22.83 &   7.10 &   7.14 &   8.22 &  18.75 &  34.54 &   2.32 &  27.21 &   8.57 &  11.78 &  25.27 &  15.27 &  13.67\\ \cline{2-17}
	\multicolumn{1}{|c|}{}   & \multicolumn{1}{|l|}{HOSC (d=2)} &  11.02 &  22.37 &  16.54 &  33.55 &  10.95 &   2.74 &  11.11 & \textbf{ 11.34} &   3.09 &  13.97 &  36.79 &  66.67 & \textbf{  8.79} &  19.15 &  19.80\\
     \cline{2-17} \rowcolor{lightgray} \multicolumn{1}{|c|}{} & \multicolumn{1}{|l|}{GDM - clean} & 0.85 & 0.00 & 1.57 & 0.65 & 0.00 & 0.00 & 0.00 & 12.76 & 0.00 & 0.00 & 0.00 & 0.00 & 0.00 & 1.22 & 0.26 \\
	\hline
	\end{tabular}
	}
     \label{Table:InlierMis-Rates}
\end{table*}

GDM was again run using the nonlinear embedding of the data. HOSC was run with the linear embedding and LRR was run with the nonlinear embedding since these were the cases that yielded the best performance in the outlier-free tests for each algorithm (see \S\ref{sec:Experiment Setup} for more details). From Fig.~\ref{fig:ROC_Figure} we can see that GDM is very competitive at detecting outliers on this database. At low FPRs GDM yields comparatively excellent performance. At higher FPRs HOSC has a moderate advantage at outlier detection v.s.~GDM. However, it will be seen later (Table \ref{Table:InlierMis-Rates}) that HOSC is not competitive at segmentation in the presence of outliers. Furthermore, the presented HOSC results were prepared using $d=2$ (see \S\ref{sec:Experiment Setup}), instead of $d=3$ as argued for by its authors. Using $d=3$ gave worse results and made the algorithm take an extremely long time to execute.

\begin{figure}[thb]
\captionsetup{margin=10pt,labelfont=bf}
\centering
\includegraphics[width=\linewidth, clip=true, trim=20mm 0mm 20mm 5mm]{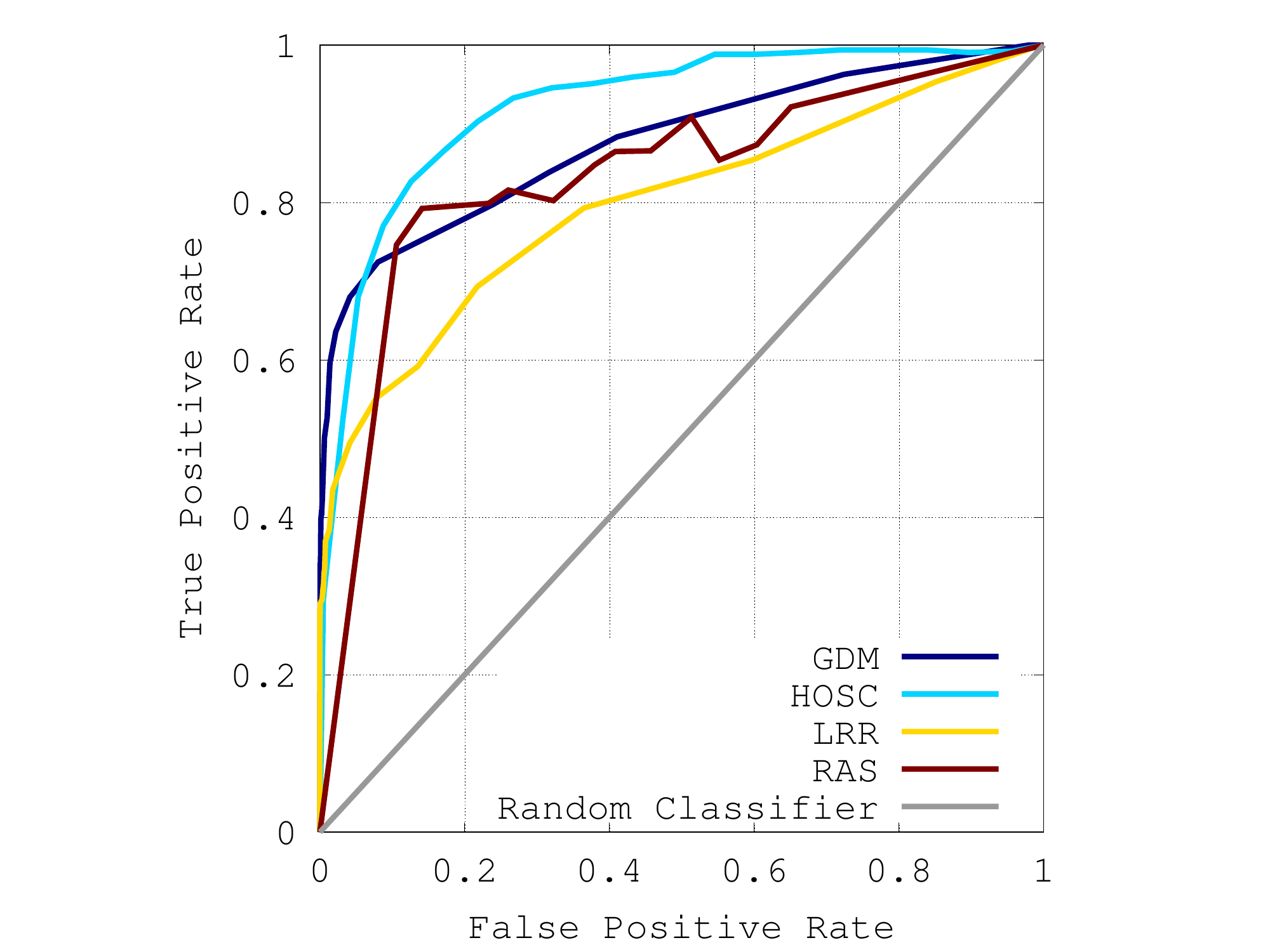}
\caption{\label{fig:ROC_Figure}The outlier detection performance of GDM (Model-Reassign) is compared against other motion segmentation methods on the outlier-free RAS database. (Color figure online)}
\end{figure}

The TPR and FPR for a robust segmentation algorithm cannot generally be controlled independently or arbitrarily. Thus, for a comparison of segmentation accuracy, one must select ``reasonable'' parameters for each method, which correspond to the same general region of ROC space. It should be understood that since the TPR and FPR cannot be controlled exactly for each method, any such comparison is inherently unfair, and by manipulating outlier parameters the results can be skewed somewhat in any direction.

For the purpose of fairly comparing GDM with other methods, we must select only one of the suggested outlier detection schemes for GDM (``GDM - Known Fraction'' or ``GDM Model-Reassign''). To effectively use ``GDM - Known Fraction'', one must either know roughly what fraction of his or her data are going to be outliers, or be in a situation where over-rejecting points as outliers is acceptable (you can then over-estimate the outlier fraction). Since this is not usually the case, we will consider the results of ``GDM Model-Reassign'' when comparing with other methods.

In Table \ref{Table:InlierMis-Rates} we present a file-by-file comparison of segmentation accuracy for the aforementioned methods using parameters that place the FPR of each method in the range of 0.01 to 0.08. Table \ref{Table:TPR and FPR for Comparison} reports the average TPR and FPR for each of these methods.

\begin{table}[H]
	\captionsetup{margin=10pt,labelfont=bf}
	\caption{True Positive Rate (TPR) and False Positive Rate (FPR) for each method in our segmentation comparison in Table \ref{Table:InlierMis-Rates}. GDM - Model Reassign, RAS, LRR, and HOSC were each tuned to achieve a false positive rate in the range of 0.01 to 0.08.}
	\centering
	\tabcolsep=0.20cm
	\begin{tabular}{ l l | c c |}
	\cline{3-4}
	& & \multicolumn{1}{|c|}{TPR}  & \multicolumn{1}{|c|}{FPR} \\ \hline
	\multicolumn{1}{|c|}{\multirow{6}{*}{\begin{sideways}\bf{Method}\end{sideways}}}
                             & \multicolumn{1}{|l|}{GDM - Model-Reassign}           & 0.56 & 0.01 \\ \cline{2-4}
	\multicolumn{1}{|c|}{}   & \multicolumn{1}{|l|}{GDM - Classic}                  & NA & NA \\ \cline{2-4}
     \multicolumn{1}{|c|}{}  & \multicolumn{1}{|l|}{RAS}                            & 0.74 & 0.08 \\ \cline{2-4}
	\multicolumn{1}{|c|}{}   & \multicolumn{1}{|l|}{LRR}                            & 0.49 & 0.04 \\ \cline{2-4}
	\multicolumn{1}{|c|}{}   & \multicolumn{1}{|l|}{HOSC}                            & 0.71 & 0.06 \\ \cline{2-4}
	\rowcolor{lightgray} \multicolumn{1}{|c|}{} & \multicolumn{1}{|l|}{GDM - clean} & NA & NA \\ \cline{2-4}
	\hline
	\end{tabular}
   	\label{Table:TPR and FPR for Comparison}
\end{table}

One can see from Table \ref{Table:InlierMis-Rates} that ``GDM Model-Reassign'' causes an overall improvement in segmentation accuracy (vs ``GDM - Classic'') in the presence of outliers. There were several files where the outliers cause the classic GDM method to misclassify large fractions of the data sets (files 4, 8, and 11 have inlier misclassification rates over 20\%). On these files the error rates of ``GDM Model-Reassign'' are dramatically lower. There are some files where the outlier detection framework appears to hurt performance, but in most of these cases the degradation is slight. The results for GDM are better in most cases (and on average) than the competing methods, although there are a few files where GDM is outperformed by a small margin. Unlike the strong outlier detection performance of HOSC discussed earlier, the segmentation capabilities of HOSC appear very intolerant to outliers (if even a few outliers slip through, segmentation performance suffers).


\section{Conclusions}
\label{sec:Conclusion}
We presented a new approach to 2-view motion segmentation, which is also a general method for HLM. Its development was motivated by the main obstacle of recovering multiple subspaces within the nonlinear embedding of point correspondences into $\R^9$; namely, the nonuniform distributions along subspaces (of unknown dimensions). The idea was to minimize a global quantity, i.e., global dimension. Unlike KSCC~\cite{AtevKSCC}, which also exploits global information, this approach does not make an a-priori assumption on the dimensions of the underlying subspaces. We formulated a fast method to minimize this global dimension, which we referred to as GDM. We demonstrated state-of-the-art results of GDM for 2-view motion segmentation.

We carefully explained the meaning of the two main parameters in our algorithm, $p$ and $\epsilon$, and the trade-offs they express. We gave a theoretical basis for selecting an appropriate value of $p$. Needless to say that these parameters are fixed throughout the paper. We described a preliminary theory which motivated the notion of global dimension, and we justified why it makes sense as an objective function in our application.

Finally, we presented an outlier detection/rejection framework for GDM. We explored two complimentary implementations of this framework, and we presented results demonstrating that it is competitive at handling outliers in this application.

\section{Appendix}
\subsection{Proof of Theorem \ref{thm:Empirical Dimension Properties}}	
We prove the four properties of the statement of the theorem.	
For simplicity we assume that $D<N$. That is, the number of data points is greater than the dimension of the ambient space. This is the usual case in many applications.\\[0.5cm]
Proof of Property 1:
Clearly, scaling all data vectors by $\alpha \ne 0$ results in scaling all the singular values of the corresponding data matrix by $\alpha$. Furthermore, this results in scaling by $\alpha$ both the numerator and denominator of the expression for the empirical dimension for any $\epsilon>0$. Therefore, the empirical dimension is invariant to this scaling.\\[0.5cm]
Proof of Property 2:
The singular values of a matrix (in particular the data matrix) are invariant to any orthogonal transformation of this matrix and thus the empirical dimension is invariant to such transformation.\\[0.5cm]
Proof of Property 3:
If $\{\bv_i\}_{i=1}^N$ are contained in a $d$-subspace, then since these form the columns of $\A$, $rank(\A) \le d$. Since $\U$ and $\V$ are orthogonal, $rank(\A)=rank(\bSigma)$. In particular, $\A$ has at most $d$ singular values. Let $\bsigma$ be the vector of singular values of $\A$, and let $1_{\bsigma}$ be the indicator vector of $\bsigma$\footnote{$1_{\bsigma}$ has a $1$ in each coordinate where $\bsigma$ has a non-zero element, and $0$'s in all other coordinates.}.

The generalized H\"{o}lder's Inequality \cite[pg. 10]{grafakos2004classical} states that if:
\begin{equation}
\label{eqn:Generalized Holders Inequality - Hypotheses}
p_1,p_2 \in (0,\infty] \;\; \text{ and } \;\; {1 \over p_1} + {1 \over p_2} = {1 \over r}
\end{equation}
then
\begin{equation}
\Vert f_1 f_2 \Vert_r \le \Vert f_1 \Vert_{p_1} \Vert f_2 \Vert_{p_2} \; \text{ for any functions $f_1$ and $f_2$.}
\end{equation}
To apply this result to vectors, we view them as functions over the set $\{ 1,2,...,D \}$ with counting measure.

Let $p_1=1$, $p_2={\epsilon \over {1-\epsilon}}$, $r=\epsilon$. Also let $f_1=1_{\bsigma}$, $f_2=\bsigma$. These values satisfy \eqref{eqn:Generalized Holders Inequality - Hypotheses}. We therefore get:
\begin{equation}
{{\Vert \bsigma \Vert_{\epsilon}} \over {\Vert \bsigma \Vert_{\epsilon \over {1-\epsilon}}}} \le \Vert 1_{\bsigma} \Vert_1 = (\text{\# of non-zero sing. values of $\A$}) \le d.
\end{equation}
Proof of Property 4:
By hypothesis, the data vectors $\{\bv_i\}_{i=1}^N$ are i.i.d. and sampled according to probability measure $\mu$, where $\mu$ is sub-Gaussian, non-degenerate, and spherically symmetric in a $d$-subspace of $\R^D$. We define the $n$th data matrix:
	$$\A_n = \left[
		\begin{tabular}{ccccc}
			$\uparrow$   & $\uparrow$   & $\uparrow$   &            & $\uparrow$   \\
			$\bv_1$       & $\bv_2$       & $\bv_3$       & $\cdots$   & $\bv_n$       \\
			$\downarrow$ & $\downarrow$ & $\downarrow$ &            & $\downarrow$ \\
		\end{tabular}
		\label{tab:}
	\right].$$
	Then $\bSigma_n := ({1 \over n}) \A_n \A_n^T$ is the $n$th sample covariance matrix of our data set. Also, let $\bv$ be a random variable with probability measure $\mu$. Then $\bSigma := E[\bv \bv^T]$ is the covariance matrix of the distribution. A consequence of $\mu$ being spherically symmetric in a $d$-subspace is that after an appropriate rotation of space, $\bSigma$ is diagonal with a fixed constant in $d$ of its diagonal entries and $0$ in all other locations. We are trying to prove a result about empirical dimension, which is scale invariant and invariant under rotations of space. Because of these two properties we can assume that the appropriate rotation and scaling has been done so that $\bSigma$ is diagonal with value $1$ in $d$ diagonal entries and $0$ in all others. Without any loss of generality, we assume that the first $d$ diagonal entries are the non-zero ones.
	
Let $\bsigma_n = \left( \sigma_{n,1}, \sigma_{n,2}, ..., \sigma_{n,D} \right)^T$, $n \ge D$,	denote the vector of singular values of the matrix $\A_n$. Our first task will be to show that ${\bsigma_n \over \sqrt{n}}$ converges in probability (as $n \rightarrow \infty$) to the vector:
\begin{equation}
\label{eqn:limiting vector of s.v's}
( \underbrace{ 1, 1 ,...,1}_d, 0,...,0 )^T.
\end{equation}

To accomplish our task, we will first relate $\bsigma_n$ to the vector of singular values of $\bSigma_n$, and then use a result showing that $\bSigma_n$ converges to $\bSigma$ as $n \rightarrow \infty$.

%
It is clear that the vector of singular values of $\bSigma_n$, which we will denote by $\bpsi$, is given by:
\begin{equation}
\bpsi = {1 \over n} \left( \sigma_{n,1}^2, \sigma_{n,2}^2, ..., \sigma_{n,D}^2 \right)^T.
\end{equation}

Next, we will need the following result regarding covariance estimation. This is Corollary 5.50 of \cite{vershynin_book}, adapted to be consistent with our notation.
	
	\bigskip
	Lemma 1 (Covariance Estimation): Consider a sub-Gaussian distribution in $\R^D$ with covariance matrix $\bSigma$. Let $\gamma \in (0,1)$, and $t \ge 1$. If $n > C(t/ \gamma)^2 D$, then with probability at least $1-2e^{-t^2 D}$, $\Vert \bSigma_n - \bSigma \Vert_2 \le \gamma$, where $\Vert \cdot \Vert_2$ denotes the spectral norm (i.e., largest singular value of the matrix). The constant $C$ depends only on the sub-Gaussian norm of the distribution.
	
	\bigskip
	In our problem, we are applying this lemma to the distribution $\mu$. Let $\gamma \in (0,1)$ be given. If
	\begin{equation} \label{eq:Lemma1Hypothesis}
	n > C(t/ \gamma)^2 D,
	\end{equation}
	then $\Vert \bSigma_n-\bSigma \Vert_2 \le \gamma$ with probability at least $1-2e^{-t^2 D}$. The $2$-norm of the difference of two matrices bounds the differences of their individual singular values. We will use the following result to make this precise:
	
	\bigskip
	Lemma 2 \cite{bhatia:1997}: Let $\sigma_i(\bullet)$ denote the $i$th largest singular value of an arbitrary $m$-by-$n$ matrix. Then: $| \sigma_i(\mathbf{B}+\mathbf{E})-\sigma_i(\mathbf{B})| \le \Vert \mathbf{E} \Vert_2$, for each $i$.
	
\bigskip
Because	$\bSigma$ is diagonal with only values $1$ and $0$ on the diagonal, the singular values of $\bSigma$ are simply these diagonal values. We will use $\mathbf{1}_{i \in 1:d}$ to denote the $i$'th singular value of $\bSigma$.
	
Setting $\mathbf{B} = \bSigma_n$ and $\mathbf{E} = \bSigma - \bSigma_n$, in lemma 2 we get: $\Vert \bSigma_n-\bSigma \Vert_2 \le \gamma \Rightarrow |(1/n)\sigma_{n,i}^2 - \mathbf{1}_{i \in 1:d}| \le \Vert \bSigma_n-\bSigma \Vert_2 \le \gamma$, for each $i$. This implies that:
\begin{equation}
{\sigma_{n,i} \over \sqrt{n}} \in \left\{
	\begin{array}{ll}
		\left[ \sqrt{{1} - \gamma}, \sqrt{{1} + \gamma} \right],  & \mbox{ if } i \le d; \\
		\left[0, \sqrt{\gamma} \right], & \mbox{ if } i > d.
	\end{array}
\right.
\end{equation}

Notice that as $\gamma \rightarrow 0$, ${\sigma_{n,i} \over \sqrt{n}}$ approaches $\mathbf{1}_{i \in 1:d}$. Specifically, for any desired tolerance, $\eta >0$, and any desired certainty, $\xi$, $n$ can be chosen large enough that with probability greater than $\xi$, $\left|\mathbf{1}_{i \in 1:d}-{\sigma_{n,i} \over \sqrt{n}} \right| < \eta$, simultaneously for each $i$. It follows from this that the vector ${\bsigma_n \over \sqrt{n}}$ converges in probability to \eqref{eqn:limiting vector of s.v's} as $n \rightarrow \infty$.

Finally, $\hat{d}_{\epsilon,n} = {{\Vert \bsigma_n \Vert_{\epsilon}} \over {\Vert \bsigma_n \Vert_{\epsilon \over {1-\epsilon}}}} = {\left( 1 \over \sqrt{n}\right){\Vert \bsigma_n \Vert_{\epsilon}} \over {\left( 1 \over \sqrt{n}\right) \Vert \bsigma_n \Vert_{\epsilon \over {1-\epsilon}}}} = {{\Vert {\bsigma_n \over \sqrt{n}} \Vert_{\epsilon}} \over {\Vert {\bsigma_n \over \sqrt{n}} \Vert_{\epsilon \over {1-\epsilon}}}}$. Thus, $\hat{d}_{\epsilon,n}$ is a continuous function of the vector ${\bsigma_n \over \sqrt{n}}$. Hence, since ${\bsigma_n \over \sqrt{n}}$ converges to $\mathbf{1}_{i \in 1:d}$ as $n \rightarrow \infty$, $\hat{d}_{\epsilon,n}$ converges in probability to
	
\begin{equation}
\left( {{\Vert (1,1,...,1,0,...,0) \Vert_\epsilon} \over {\Vert (1,1,...,1,0,...,0) \Vert_{\left({{\epsilon} \over {1-\epsilon}}\right)}}} \right) = {{d^{{1} \over {\epsilon}}} \over {d^{{1-\epsilon} \over {\epsilon}}}} = d^{{{1} \over {\epsilon}} - {{1-\epsilon} \over {\epsilon}}} = d.
\end{equation}

\subsection{Proof of Theorem \ref{THM:Global Dimension Theorem}}
Recall that $\Pi_{Nat}$ denotes the natural partition of the data set. First, we notice that $GD(\Pi_{Nat}) = \Vert (d_1,d_2,...,d_K) \Vert_p$, where $d_k$ is the true dimension of set $k$ of the partition. Notice that $d_k$ cannot exceed $d$ since $\mu_k$ is supported by $L_k$, a $d$-subspace. Furthermore, since $\mu_k$ does not concentrate mass on subspaces it is a probability 0 event that all $N_k$ points from $L_k$ exist in a proper subspace of $L_k$. Thus, for the natural partition, $d_k$ is almost surely $d$, for each $k$. Hence, $GD(\Pi_{Nat})$ is almost surely $\Vert (d,d,...,d) \Vert_p = \left( K d^p\right)^{1/p} = K^{1/p} d$.

Next, we will find a lower bound for the global dimension of any non-natural partition of the data, and show that if $p$ meets the hypothesis criteria, the lower bound we get is greater than $K^{1/p} d = GD(\Pi_{Nat})$. To accomplish this we need the following lemma.
\begin{lemma}
\label{lemma:Main Lemma}
If $\Pi \ne \Pi_{Nat}$ then $\Pi$ almost surely has one set with dimension at least $d+1$.
\end{lemma}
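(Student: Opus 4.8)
The plan is to prove the contrapositive: I will show that if every set of a partition $\Pi$ into $K$ or fewer sets has true dimension at most $d$, then $\Pi$ must equal $\Pi_{Nat}$ almost surely. Equivalently, I assume for contradiction that $\Pi\neq\Pi_{Nat}$, that $\Pi$ has at most $K$ sets, and that $d_{True}(\Pi_j)\le d$ for every $j$, and I derive a contradiction. Throughout I write $P_k$ for the set of $N_k$ points drawn from $\mu_k$ (so $P_k$ is exactly the $k$-th block of $\Pi_{Nat}$), and I say a set of the partition \emph{fully contains} $L_k$ when it includes at least $d$ points of $P_k$.

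First I would record the genericity facts that make the argument almost sure. Since each $\mu_k$ is non-degenerate on $L_k$ (it places no mass on any proper subspace of $L_k$), a Fubini/conditioning argument over the finitely many points shows that, almost surely, any $d$ points of $P_k$ span $L_k$, and any single point of $P_{k'}$ with $k'\neq k$ lies outside $L_k$ (because $L_{k'}\cap L_k$ is a proper subspace of $L_{k'}$ and hence has $\mu_{k'}$-measure zero). The decisive consequence is: if one set of $\Pi$ fully contains some $L_k$ and also contains even a single point of a different $P_{k'}$, then that set spans $L_k$ plus a vector off $L_k$, so its dimension is at least $d+1$, contradicting the standing assumption. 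Hence, under the assumption, every set that fully contains a subspace is \emph{pure} (meets only one $P_k$), and no set can fully contain two different subspaces.

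With these observations the proof reduces to a counting argument that exploits the budget of at most $K$ sets. The goal is to show that each subspace $L_k$ is fully contained in exactly one set, that this set is pure, and that $P_k$ is not split across sets; these three facts together say precisely that $\Pi=\Pi_{Nat}$. The engine is that $N_k>d$: if $L_k$ were \emph{not} fully contained by any set, its $N_k\ge d+1$ points would be spread with at most $d-1$ per set and would therefore occupy at least two sets, while a set that does fully contain $L_k$ is forced to be pure and hence cannot help house any other subspace. Matching $K$ subspaces, each demanding its own dedicated pure, full set, against a supply of at most $K$ sets should force a bijection with no room left for splitting or mixing; any deviation from $\Pi_{Nat}$ strictly increases the number of sets required beyond $K$, which is the contradiction.

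I expect the main obstacle to be exactly this last counting step, in the regime where the subspaces are in special position. When the $L_k$ can share large intersections, a ``thin'' set may meet many subspaces while keeping its dimension at most $d$ (since one generic point from each of several overlapping subspaces need not be independent), so one cannot simply bound how many subspaces a low-dimensional set touches. The plan is to handle this by charging sets to subspaces rather than counting raw incidences: using the pure-set consequence above together with the pigeonhole forced by $N_k>d$ per subspace and at most $K$ sets in total, I would argue that any thin-spreading or mixing configuration consumes strictly more than $K$ sets. Verifying that this charging is tight for every special-position configuration is the delicate part; the clean cases $d=1$ (where two points from distinct subspaces already span dimension $2>d$, so no mixing is possible) and subspaces in general position serve as sanity checks for the general count.
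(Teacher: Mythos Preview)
Your genericity observations and the ``purity'' consequence are correct and useful: if a set of dimension at most $d$ contains $d$ points from some $P_k$ then it almost surely spans $L_k$, and a single additional point from any other $P_{k'}$ pushes the dimension to $d+1$. Using this you do dispose of the case in which \emph{every} subspace is fully contained by some set: then there are $K$ pairwise disjoint pure sets, one per subspace, which forces $\Pi=\Pi_{Nat}$.

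The gap is exactly where you flag it. In the remaining case (some $L_k$ is ``scattered'', i.e., every set holds at most $d-1$ of its points) your charging argument is not completed, and the concern about special position is real. When the $L_k$ have large mutual intersections, a single non-pure set of dimension $\le d$ can absorb up to $d-1$ points from \emph{each} scattered subspace, so a pigeonhole on ``points per set'' does not yield a contradiction once $K-m\ge 2$ scattered subspaces are present (for $d\ge 3$ one has $(K-m)(d-1)\ge d+1$). Making the charging tight would require a lower bound on the dimension contributed by points drawn from several possibly overlapping subspaces, and that is precisely what a bare set-count does not supply.

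The paper closes this gap by counting dimensions rather than sets. The key auxiliary fact (its ``Minor Lemma'') is: if $Q$ has dimension $\le d$ and fewer than $d$ points from some $L_i$, then adding one more generic point from $L_i$ raises $\dim Q$ by $1$. With this in hand, pick $d$ points from each subspace ($Kd$ points in all), choosing the $d$ points from a split subspace $L^\ast$ so that they are not all in the same $\Pi$-set as a reserved extra point $v^\ast\in P^\ast$. Restrict $\Pi$ to these $Kd$ points and remove them one at a time: after any removal the affected induced set has fewer than $d$ points from the relevant subspace and dimension $\le d$, so by the Minor Lemma the removal drops that set's dimension by exactly $1$. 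Hence the total dimension of the induced partition is at least $Kd$; it is also at most $Kd$ (at most $K$ sets, each of dimension $\le d$), so every induced set has dimension exactly $d$. Finally, the induced set containing $v^\ast$ has fewer than $d$ of the selected $L^\ast$-points, so adding $v^\ast$ (again by the Minor Lemma) forces dimension $d+1$ in a subset of some $\Pi_j$---the desired contradiction. This dimension-sum argument sidesteps the special-position issue entirely and is the missing idea in your outline.
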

Before proving the lemma, observe that a consequence is that if $\Pi \ne \Pi_{Nat}$, then with probability 1:
\begin{equation}
GD(\Pi) \ge \Vert (?,...,?,d+1,?,...,?) \Vert_p \ge d+1.
\end{equation}
Then, from our hypothesis:
\begin{align}
p > &ln(K)/(ln(d+1)-ln(d)) \notag \\
\Longrightarrow & \; \left( {{d+1} \over d} \right)^p > K \notag \\
\Longrightarrow & \; d+1 > K^{1/p} d.
\end{align}
Hence,
\begin{equation}
GD(\Pi) \ge d+1 > K^{1/p} d = GD(\Pi_{Nat}).
\end{equation}
Thus, if we show Lemma \ref{lemma:Main Lemma}, the proof of the theorem follows. To prove Lemma \ref{lemma:Main Lemma} we require an a simpler lemma:
\begin{lemma}
\label{lemma:Minor Lemma}
If a set $Q$ in $\Pi$ has fewer than $d$ points from a subspace $L_i$, then either $Q$ has dimension at least $d+1$ or adding another point from $L_i$ to $Q$ (an R.V. $X$ with probability measure $\mu_i$, independent from all other samples) will almost surely increase the dimension of $Q$ by 1.
\end{lemma}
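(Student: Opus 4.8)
The plan is to study the linear span $V := \Span(Q)$ and to use the elementary fact that appending one new vector $X$ raises $\dim V$ by exactly $1$ when $X \notin V$, leaves it unchanged when $X \in V$, and can never raise it by more than one. Since $X$ is drawn from $\mu_i$, which is supported on $L_i$, we always have $X \in L_i$, so the event $\{X \in V\}$ coincides with $\{X \in V \cap L_i\}$. I would then split on the dichotomy of whether $L_i \subseteq V$ or not, and show that the ``not contained'' branch yields the second alternative (dimension increases almost surely) while the ``contained'' branch yields the first alternative ($\dim Q \ge d+1$).

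If $L_i \not\subseteq V$, then $V \cap L_i$ is a proper subspace of $L_i$. Because $\mu_i$ is non-degenerate, it assigns zero mass to every proper subspace of $L_i$, so $P(X \in V \cap L_i) = 0$. Hence almost surely $X \notin V$, and appending $X$ increases $\dim V$ by $1$; this is the second alternative of the lemma.

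The remaining and more delicate case is $L_i \subseteq V$, where I must instead show that $Q$ already has dimension at least $d+1$. Let $\bw_1,\dots,\bw_m$ be the points of $Q$ lying in $L_i$; by hypothesis $m<d$, so they span a subspace $W \subseteq L_i$ with $\dim W \le m < d$, i.e.\ $W \subsetneq L_i$. Thus the $L_i$-points alone cannot fill out $L_i$, and $L_i \subseteq V$ forces $Q$ to contain at least one point $\bu$ drawn from some other subspace $L_\ell$, $\ell \ne i$. The key observation is that the finitely many points of $Q$ coming from subspaces other than $L_i$ almost surely avoid $L_i$: each is an independent sample of some $\mu_\ell$, and $L_i \cap L_\ell$ is a proper subspace of $L_\ell$ (distinct $d$-subspaces meet in dimension $<d$), hence $\mu_\ell$-null. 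On this almost-sure event, were $\dim V = d$, then $L_i \subseteq V$ would force $V = L_i$, whence $\bu \in L_i$, a contradiction. Therefore $\dim V \ge d+1$, which is the first alternative.

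The main obstacle is precisely this $L_i \subseteq V$ case: one must rule out that points of $Q$ drawn from the \emph{other} subspaces conspire to span all of $L_i$ while keeping $\dim V = d$. This is exactly where the non-degeneracy of the $\mu_\ell$ and the distinctness of the subspaces enter, guaranteeing on an almost-sure event that every cross-subspace point contributes a genuinely new direction outside $L_i$. I would state this almost-sure avoidance event up front (a finite intersection of probability-one events), so that both the present lemma and its use in Lemma \ref{lemma:Main Lemma} can invoke it cleanly.
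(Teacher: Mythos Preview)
Your proposal is correct and follows essentially the same approach as the paper. The paper assumes $\dim Q \le d$ and argues that then almost surely $L_i \not\subseteq \Span(Q)$ (using exactly your two ingredients: fewer than $d$ points from $L_i$ cannot span $L_i$, and each point drawn from some $\mu_\ell$ with $\ell\ne i$ a.s.\ avoids $L_i$), after which the intersection $\Span(Q)\cap L_i$ is proper and $X$ a.s.\ falls outside it; your dichotomy on whether $L_i\subseteq V$ is simply the contrapositive organization of the same argument.
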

\begin{proof}
If $\Dim(Q) \le d$ then $Q$ has dimension strictly less than the ambient space ($\R^D$). Observe that $\Span (Q)$ is a linear subspace of $\R^D$, which a.s. does not contain $L_i$. We cannot have proper containment since $\Dim(L_i)=d \ge \Dim(Q)$. Also, we have fewer than $d$ points from $L_i$ in $Q$, and each other point in $Q$ lies in $L_i$ with probability 0 (All $\mu_i$ do not concentrate mass on subspaces). Thus, $\Span(Q)$ a.s. does not equal $L_i$.

Therefore, if we intersect $L_i$ with $\Span(Q)$ we get a proper subspace of $L_i$; call it $\bar{L}$. We note that $\mu_i(\bar{L})=0$ since $\mu_i$ does not concentrate on subspaces. Thus, since $X$ has probability measure $\mu_i$, $X$ a.s. lies outside the intersection of $L_i$ and $\Span(Q)$. It follows that if we add $X$ to $Q$, the dimension of $Q$ a.s. increases by 1. \QED
\end{proof}

Now we prove Lemma \ref{lemma:Main Lemma}. We will assume all sets in $\Pi$ have dimension less than $d+1$ and pursue a contradiction. By hypothesis, our set $\{\bv_n\}_{n=1}^N$ contains at least $d+1$ points from each subspace $L_i$. Since $\Pi \ne \Pi_{Nat}$, there is some subspace $L^*$ whose points are assigned to $2$ or more distinct sets in $\Pi$. Let $\bv^*$ be a point from $L^*$. Now, choose $d$ points from each $L_i$ and denote this collection of $Kd$ points $\{y_1,y_2,...,y_{Kd}\}$. When making this selection, ensure that $v^*$ is not chosen and that of the points selected from $L^*$, not all of them are assigned to the same set in $\Pi$ as $\bv^*$. Notice that $\Pi$ induces a partition on $\{y_1,y_2,...,y_{Kd}\}$.

Select any point $y_i$ and remove it from the set $\{y_1,y_2,...,y_{Kd}\}$. Since we are assuming that each set in $\Pi$ has dimension less than $d+1$, Lemma 2 implies that the set in $\Pi$ to which $y_i$ belongs will have its dimension decrease by $1$. Now select another point $y_j$ and remove it. Lemma 2 still applies and so the set to which $y_j$ belonged will have its dimension decrease by 1. We can repeat this until all $Kd$ points have been removed. Since each removal decreases the dimension of some set in $\Pi$ by $1$ it follows that before any removals the sum of the dimensions of all sets in $\Pi$ was at least $Kd$. Since each of the $K$ sets in $\Pi$ had dimension $d$ or less, we conclude that in fact each set must have had dimension exactly $d$.

Now, consider our set $\{y_1,y_2,...,y_{Kd}\}$ and add in $v^*$. By our choice of $v^*$, Lemma $2$ implies that its addition a.s. increases the dimension of its target set in $\Pi$ by $1$ (to $d+1$). Adding in all remaining points from $\{\bv_n\}_{n=1}^N$ will only increase the dimensions of the sets in $\Pi$. Thus, we almost surely have a set of dimension at least $d+1$ in $\Pi$, contradicting our hypothesis.\QED

\subsection{Proof of Theorem 3}
Recall that the soft partition is stored in a membership matrix $\M$. Specifically, the $(k,n)$'th element of $\M$, denoted $m_k^n$, holds the ``probability'' that vector $\bv_n$ belongs to cluster $k$. Thus, each column of $\M$ forms a probability vector.

Hence, global dimension is a real-valued function of the matrix $\M$. We will think of the membership matrix as being vectorized, so that the domain of optimization can be thought of as a subset of $\R^{NK}$. However, we will not explicitly vectorize the membership matrix. Thus, when we talk about the \emph{gradient} of global dimension, we are referring to another $K$-by-$N$ matrix, where the $(k,n)$'th element is the derivative of global dimension w.r.t.~$m_k^n$.

To differentiate global dimension we must be able to differentiate the singular values of a matrix w.r.t.~each element of that matrix. A treatment of this is available in \cite{PapadopouloSVD}.

To begin, recall the definition of $GD$:
\begin{equation}
\label{eqn:DefOfG}
GD = \left \Vert
\begin{tabular}{c}
$\hat{d}_\epsilon^1$ \\ $\hat{d}_\epsilon^2$ \\ $\vdots$ \\ $\hat{d}_\epsilon^K$
\end{tabular}
\right \Vert_p = \left( (\hat{d}_\epsilon^1)^p + (\hat{d}_\epsilon^2)^p + ... + (\hat{d}_\epsilon^K)^p \right)^{1/p}.
\end{equation}
We will denote the thin SVD (only $D$ columns of $\U$ and $\V$ are used) of $\A_k$:
\begin{equation}
\A_k = \U_k \bSigma_k {\V_k}^T.
\end{equation}
Also, we will let $\sigma_j^i$ refer to the $(j,j)$'th element of $\bSigma_i$.
Then, using the chain rule:
\begin{equation}
\label{eqn:pG/pm_k^n}
\frac{\partial GD}{\partial m_k^n} = \frac{\partial GD}{\partial \hat{d}_\epsilon^1} \frac{\partial \hat{d}_\epsilon^1}{\partial m_k^n} + \frac{\partial GD}{\partial \hat{d}_\epsilon^2} \frac{\partial \hat{d}_\epsilon^2}{\partial m_k^n} + ... + \frac{\partial GD}{\partial \hat{d}_\epsilon^K} \frac{\partial \hat{d}_\epsilon^K}{\partial m_k^n}.
\end{equation}
From \eqref{eqn:DefOfG} we can compute $\frac{\partial GD}{\partial \hat{d}_\epsilon^i}$ rather easily:
\begin{align}
\notag \frac{\partial GD}{\partial \hat{d}_\epsilon^i} =& \frac{1}{p} \left( (\hat{d}_\epsilon^1)^p + (\hat{d}_\epsilon^2)^p + ... + (\hat{d}_\epsilon^K)^p \right)^{\frac{1}{p}-1} p (\hat{d}_\epsilon^i)^{p-1}\\
 =& (\hat{d}_\epsilon^i)^{p-1} \left( (\hat{d}_\epsilon^1)^p + (\hat{d}_\epsilon^2)^p + ... + (\hat{d}_\epsilon^K)^p \right)^{\frac{1}{p}-1}.
\end{align}
Next, we expand the other components of \eqref{eqn:pG/pm_k^n}:
\begin{equation}
\label{eqn:partiald_i/partialm_k^n}
\frac{\partial \hat{d}_\epsilon^i}{\partial m_k^n} = \sum_{j=1}^D \frac{\partial \hat{d}_\epsilon^i}{\partial \sigma_j^i} \frac{\partial \sigma_j^i}{\partial m_k^n}.
\end{equation}

We now use the definition of $\hat{d}_\epsilon^i$ to compute the first factor of each term as follows:
\begin{align}
\notag \frac{\partial \hat{d}_\epsilon^i}{\partial \sigma_j^i} =&
{{\Vert \bsigma_i \Vert_\delta {{\partial} \over {\partial \sigma_j^i}} \left( \left( (\sigma_1^i)^\epsilon + ... + (\sigma_D^i)^\epsilon \right)^{1/\epsilon} \right)} \over {\Vert \bsigma_i \Vert_\delta^2}} -\\
\notag & {{\Vert \bsigma_i \Vert_\epsilon {{\partial} \over {\partial \sigma_j^i}} \left( \left( (\sigma_1^i)^\delta + ... + (\sigma_D^i)^\delta \right)^{1/\delta} \right)} \over {\Vert \bsigma_i \Vert_\delta^2}} \\
\notag =& {{\Vert \bsigma_i \Vert_\delta \left( (\sigma_1^i)^\epsilon + ... + (\sigma_D^i)^\epsilon \right)^{{1-\epsilon} \over {\epsilon}} \left( \sigma_j^i \right)^{\epsilon-1}} \over {\Vert \bsigma_i \Vert_\delta^2}} -\\
\notag & {{\Vert \bsigma_i \Vert_\epsilon \left( (\sigma_1^i)^\delta + ... + (\sigma_D^i)^\delta \right)^{{1-\delta} \over {\delta}} \left( \sigma_j^i \right)^{\delta-1}} \over {\Vert \bsigma_i \Vert_\delta^2}} \\
\notag =& \left( {{1} \over {\Vert \bsigma_i \Vert_\delta^2}} \right) \Vert \bsigma_i \Vert_\delta \Vert \bsigma_i \Vert_\epsilon^{1-\epsilon} \left( \sigma_j^i \right)^{\epsilon-1} -\\
\notag & \left( {{1} \over {\Vert \bsigma_i \Vert_\delta^2}} \right) \Vert \bsigma_i \Vert_\epsilon \Vert \bsigma_i \Vert_\delta^{1-\delta} \left( \sigma_j^i \right)^{\delta-1} \\
=& C_1^i \left( \sigma_j^i \right)^{\epsilon-1} - C_2^i \left( \sigma_j^i \right)^{\delta-1},
\end{align}
where
\begin{equation}
C_1^i = \left( {{\Vert \bsigma_i \Vert_\epsilon^{1-\epsilon} \Vert \bsigma_i \Vert_\delta} \over {\Vert \bsigma_i \Vert_\delta^2}} \right), \; \; \; \; \;
C_2^i = \left( {{\Vert \bsigma_i \Vert_\epsilon \Vert \bsigma_i \Vert_\delta^{1-\delta}} \over {\Vert \bsigma_i \Vert_\delta^2}} \right).
\end{equation}

Next, we must evaluate the second factor in each term of \eqref{eqn:partiald_i/partialm_k^n}. Recall that $\sigma_j^i$ is the $j$'th largest singular value of the matrix $\A_i$. To achieve the next step, we must observe that each singular value of $\A_i$ depends, in general, on each element of the matrix $\A_i$. We can then compute the derivative of each element of the matrix $\A_i$ w.r.t.~each membership variable, $m_k^n$. We will denote the $(\alpha,\beta)$'th element of the matrix $\A_i$ by ${\A_i}_{(\alpha,\beta)}$. Using the chain rule:
\begin{equation}
\label{eqn:partialSigma_j^i/partialm_k^n}
\frac{\partial \sigma_j^i}{\partial m_k^n} = \sum_{\beta = 1}^N \sum_{\alpha=1}^D \frac{\partial \sigma_j^i}{\partial {\A_i}_{(\alpha,\beta)}} \frac{\partial {\A_i}_{(\alpha,\beta)}}{m_k^n}.
\end{equation}

A powerful result \cite[eqn. 7]{PapadopouloSVD} allows us to express the partial derivative of each singular value, $\sigma_j^i$, w.r.t.~a given matrix element in terms of the already-known SVD of $\A_i$:
\begin{equation}
\frac{\partial \sigma_j^i}{\partial {\A_i}_{(\alpha,\beta)}} = \U_{i(\alpha,j)} \V_{i(\beta,j)}.
\end{equation}

The second factor in each term of \eqref{eqn:partialSigma_j^i/partialm_k^n} can be evaluated directly from the definition of $\A_k$:
\begin{equation}
\frac{\partial {\A_i}_{(\alpha,\beta)}}{\partial m_k^n} =
\begin{cases}
0, & \text{ if $n \neq \beta$ or if $i \neq k$;} \\
\bv_n \cdot \hat{\mathbf{e}_{\alpha}}, & \text{ if $n = \beta$ and $i = k$,}
\end{cases}
\end{equation}
where $\hat{\mathbf{e}_{\alpha}}$ denotes the $\alpha$'th standard basis vector (1 in position $\alpha$ and $0$'s everywhere else).

We are now in a position to work backwards and construct the partial derivative of $GD$ w.r.t.~$m_k^n$. In what follows, $\delta_{ik}$ is equal to $1$ if $i=k$ and is $0$ otherwise (this is not to be confused with the un-subscripted $\delta$, which is shorthand for $\epsilon/(1-\epsilon)$). Also, for notational convenience, we use Matlab notation to represent a row or column of a matrix ($B_{(w,:)}$ and $B_{(:,w)}$, respectively). We fist compute $\partial \sigma_j^i / \partial m_k^n $ as follows:
{\small
\begin{equation}
\begin{split}
\frac{\partial \sigma_j^i}{\partial m_k^n} = & \sum_{\alpha=1}^D \frac{\partial \sigma_j^i}{\partial {\A_i}_{(\alpha,n)}} \frac{\partial {\A_i}_{(\alpha,n)}}{m_k^n} \\
= & \sum_{\alpha=1}^D \U_{i(\alpha,j)} \V_{i(n,j)} \left( \bv_n \cdot \hat{\mathbf{e}_{\alpha}} \right) \delta_{ik} \\
= & \left[
\begin{tabular}{c}
$\U_{i(1,j)} \V_{i(n,j)}$ \\
$\U_{i(2,j)} \V_{i(n,j)}$ \\
\vdots \\
$\U_{i(D,j)} \V_{i(n,j)}$
\end{tabular}
\right] \cdot \bv_n \delta_{ik} =
\V_{i(n,j)}
\left[
\begin{tabular}{c}
$\U_{i(1,j)}$ \\
$\U_{i(2,j)}$ \\
\vdots \\
$\U_{i(D,j)}$
\end{tabular}
\right] \cdot \bv_n
\delta_{ik} \\
= & \V_{i(n,j)} \left( \U_{i(:,j)} \cdot \bv_n \right)  \delta_{ik}.
\end{split}
\end{equation}}
Then from \eqref{eqn:partiald_i/partialm_k^n}, we get
{\small
\begin{equation}
\begin{split}
\frac{\partial \hat{d}_\epsilon^i}{\partial m_k^n} = & \sum_{j=1}^D \frac{\partial \hat{d}_\epsilon^i}{\partial \sigma_j^i} \frac{\partial \sigma_j^i}{\partial m_k^n} \\
= & \sum_{j=1}^D \left( C_1^i \left( \sigma_j^i \right)^{\epsilon-1} - C_2^i \left( \sigma_j^i \right)^{\delta-1} \right) \V_{i(n,j)} \left( \U_{i(:,j)} \cdot \bv_n \right) \delta_{ik}.
\end{split}
\end{equation}}
Now we can write:
\begin{equation} \begin{split}
\label{eqn:partials_i/partialm_k^n-2}
\frac{\partial \hat{d}_\epsilon^i}{\partial m_k^n} =& C_1^i \left( \sum_{j=1}^D \left( \sigma_j^i \right)^{\epsilon-1} \V_{i(n,j)} \left( \U_{i(:,j)} \cdot \bv_n \right) \delta_{ik} \right) -\\
& C_2^i \left( \sum_{j=1}^D \left( \sigma_j^i \right)^{\delta-1} \V_{i(n,j)} \left( \U_{i(:,j)} \cdot \bv_n \right) \delta_{ik} \right).
\end{split} \end{equation}

We now simplify the components of \eqref{eqn:partials_i/partialm_k^n-2}. After some manipulation, and using the notation
\begin{equation}
\left( \bSigma_i \right)^{\epsilon-1} = \left[
\begin{tabular}{c c c}
$(\sigma_1^i)^{\epsilon-1}$ & $0$      & $0$ \\
$0$                         & $\ddots$ & $0$ \\
$0$                         & $0$      & $(\sigma_D^i)^{\epsilon-1}$
\end{tabular}
\right],
\end{equation}
we can write
{\small
\begin{equation}
\label{eqn:firstSimplification}
\begin{split}
\sum_{j=1}^D & \left( \sigma_j^i \right)^{\epsilon-1} \V_{i(n,j)} \left( \U_{i(:,j)} \cdot \bv_n \right) =\\
& \left[ \left( \sigma_1^i \right)^{\epsilon-1} \V_{i(n,1)}, ..., \left( \sigma_D^i \right)^{\epsilon-1} \V_{i(n,D)} \right] \left[
\begin{tabular}{c}
$\U_{i(:,1)} \cdot \bv_n$ \\
$\U_{i(:,2)} \cdot \bv_n$ \\
\vdots \\
$\U_{i(:,D)} \cdot \bv_n$
\end{tabular}
\right] \\
& = \V_{i(n,:)} \left( \bSigma_i \right)^{\epsilon-1} \left( \U_i \right)^T \bv_n.
\end{split}
\end{equation}
}
Similarly, we can simplify part of the second term of \eqref{eqn:partials_i/partialm_k^n-2}:

\begin{equation}
\label{eqn:secondSimplification}
\sum_{j=1}^D \left( \sigma_j^i \right)^{\delta-1} \V_{i(n,j)} \left( \U_{i(:,j)} \cdot \bv_n \right) = \V_{i(n,:)} \left( \bSigma_i \right)^{\delta-1} \left( \U_i \right)^T \bv_n.
\end{equation}
Substituting \eqref{eqn:firstSimplification} and \eqref{eqn:secondSimplification} into \eqref{eqn:partials_i/partialm_k^n-2} we get
\begin{align*}
\frac{\partial \hat{d}_\epsilon^i}{\partial m_k^n} = &\left[ C_1^i \left( \V_{i(n,:)} \left( \bSigma_i \right)^{\epsilon-1} \left( \U_i \right)^T \bv_n \right) -\right.\\
& \;\; \left. C_2^i \left( \V_{i(n,:)} \left( \bSigma_i \right)^{\delta-1} \left( \U_i \right)^T \bv_n \right) \right] \delta_{ik}.
\end{align*}
With this expression we are ready to evaluate \eqref{eqn:pG/pm_k^n} as follows:
{\small
\begin{align}
\label{eqn:Unsimplified partial GD partial m_k^n}
&{{\partial GD} \over {\partial m_k^n}} = \sum_{i=1}^K {{\partial GD} \over {\partial \hat{d}_\epsilon^i}} {{\partial \hat{d}_\epsilon^i} \over {\partial m_k^n}} \notag \\
=& \sum_{i=1}^K (\hat{d}_\epsilon^i)^{p-1} \left( (\hat{d}_\epsilon^1)^p + ... + (\hat{d}_\epsilon^K)^p \right)^{\frac{1}{p}-1} \delta_{ik} \cdot  \notag \\
& \; \left[ C_1^i \left( \V_{i(n,:)} \left( \bSigma_i \right)^{\epsilon-1} \left( \U_i \right)^T \bv_n \right) - C_2^i \left( \V_{i(n,:)} \left( \bSigma_i \right)^{\delta-1} \left( \U_i \right)^T \bv_n \right) \right]  \notag \\
=& (\hat{d}_\epsilon^k)^{p-1} \left( (\hat{d}_\epsilon^1)^p + ... + (\hat{d}_\epsilon^K)^p \right)^{\frac{1}{p}-1} \cdot  \notag \\
& \; \left[ C_1^k \V_{k(n,:)} \left( \bSigma_k \right)^{\epsilon-1} \left( \U_k \right)^T \bv_n - C_2^k \V_{k(n,:)} \left( \bSigma_k \right)^{\delta-1} \left( \U_k \right)^T \bv_n \right]  \notag \\
=& (\hat{d}_\epsilon^k)^{p-1} \Vert \left( \hat{d}_\epsilon^1, ..., \hat{d}_\epsilon^K \right) \Vert_p^{1-p} \cdot  \notag \\
& \; \left[ \V_{k(n,:)} \left( C_1^k \left( \bSigma_k \right)^{\epsilon-1} - C_2^k \left( \bSigma_k \right)^{\delta-1} \right) \left( \U_k \right)^T \bv_n \right]  \notag \\
=& (\hat{d}_\epsilon^k)^{p-1} \Vert \left( \hat{d}_\epsilon^1, ..., \hat{d}_\epsilon^K \right) \Vert_p^{1-p} \V_{k(n,:)} \D_k \left( \U_k \right)^T \bv_n,
\end{align}
}
where
\begin{equation}
\D_k = \left( C_1^k \left( \bSigma_k \right)^{\epsilon-1} - C_2^k \left( \bSigma_k \right)^{\delta-1} \right).
\end{equation}
We re-write \eqref{eqn:Unsimplified partial GD partial m_k^n} as follows:
\begin{equation}
{{\partial GD} \over {\partial m_k^n}} = \V_{k(n,:)} \left( (\hat{d}_\epsilon^k)^{p-1} \Vert \left( \hat{d}_\epsilon^1, ..., \hat{d}_\epsilon^K \right) \Vert_p^{1-p} \D_k \left( \U_k \right)^T \right) \A_{(:,n)}.
\end{equation}

\subsection{Experiment Setup}
\label{sec:Experiment Setup}
For our comparison on the outlier-free RAS database, we include the following methods: GDM, SCC~\cite{spectral_applied} from
www.math.umn.edu/$\sim$lerman/scc, MAPA~\cite{mapa} from www.math.duke.edu/$\sim$glchen/mapa.html, SSC
\cite{ssc09} (version 1.0 based on CVX) from www.vision.jhu.edu/code, SLBF (\& SLBF-MS)~\cite{LBF_journal12} from www.math.umn.edu/$\sim$lerm-
an/lbf, LRR~\cite{lrr_short} from sites.google.com/site/guangcanliu, RAS~\cite{rao_ijcv} (obtained directly from the authors),
and HOSC~\cite{higher-order} from www.math.duke.edu/$\sim$glchen/hosc.html. For each method in our comparisons (outlier-free and our tests with outliers) the implementation of each algorithm is that of the original authors. Most of these codes were found on the respective authors' websites, although some codes were obtained from the authors directly when they could not be found online. As a matter of good testing methodology, we ran each method 10 times on each file. This is because we want to avoid capturing any fluke occurrences of any method, but instead seek the ``usual case'' results (this is important for repeatability of the results). Of the 10 runs for a given file and method, the median error is reported. For deterministic methods, we get the same exact results for each run. GDM involves randomness, but the average standard deviation of the misclassification errors was $0.73\%$, meaning that it behaved very consistently in the experiment. GDM was run with $n_1=10$. The other parameters ($\epsilon$ and $p$) are fixed throughout all experiments and are addressed earlier. SCC was run with $d=3$ for the linearly embedded data (this was found to give the best results), and $d=7$ for the nonlinearly embedded data (as recommended in~\cite{AtevKSCC}). MAPA was run without any special parameters. SSC was run with no data projection (because of the low ambient dimension to start with), the affine constraint enabled (we tried it both ways and this gave better results), optimization method = ``Lasso'' (Default for authors code), and parameter lambda = 0.001 (found through trial and error). SLBF was run with $d=3$ for the linearly embedded data and $d=6$ for the nonlinearly embedded data and $\sigma$ was set to $20,000$ for both cases ($d$ and $\sigma$ were selected by trial and error to give the best results). LRR was run with $\lambda=100$ for the linear case and $\lambda=10000$ for the non-linear case (these seemed to give the best results). RAS proved rather sensitive to its main parameter (``angleTolerance''), and no single value gave good across-the-board results. We ran with all default parameters and many other combinations. The results presented were generated using $\text{angleTolerance}=0.22$ and $\text{boundaryThreshold}=5$, as this combination gave the best results from our tests (better than the algorithms defaults). HOSC was run with $\eta$ automatically selected by the algorithm from the range $[0.0001, 0.1]$. The parameter ``knn'' was set to 20, and the default ``heat'' kernel was chosen. The algorithm was tried with $d$ set to 2 and 3. Both of these cases are presented. $d=2$ gave better results, but the authors of HOSC argue for using $d=3$ in this setting.

For our comparison on the outlier-free Hopkins 155 database, the algorithms that were selected for the comparison were run once on each of the 155 data files. The mean and median performance for each category is reported. GDM was run with $n_1=30$ to improve reliability. All other parameters were left fixed, and (as before) the non-linearly embedded data was used. Each competing 2-view method was run on the non-linearly embedded data with the same parameters that gave the best performance on the RAS database. The competing n-view methods have their parameters given in the results tables.

For our outlier comparison on the corrupted RAS database, we ran GDM with $n_1=30$ (same as for the Hopkins 155 database). For the naive approach, we used $\alpha=0.02$. For ``GDM - Known Fraction'' we rejected 20\% of the dataset. For ``GDM - Model Reassign'' we used $\kappa=0.05$. ``GDM - Classic'' was the same algorithm as in the outlier-free comparisons and so had no extra parameters. RAS was run with $\text{angleTolerance}=0.22$ and $\text{boundaryThreshold}=5$ (same as in the outlier-free tests). We ran LRR with $\lambda=0.1$, and $\text{outlierThreshold}=0.138$ (these gave the best results of the combinations we tried). HOSC was run with $d=2$ (which gave the best results in the outlier-free case) and $\alpha=0.11$.

The code for GDM can be found on our supplemental webpage. For each of the algorithms used in our comparisons, we have made an effort to provide (on the supplemental webpage) the code or a link to where the code can be found.

%
%


\begin{acknowledgements}
This work was supported by NSF grants DMS-09-15064 and DMS-09-56072. GL was partially supported by the IMA during their annual program on the mathematics of information (2011-2012) and BP benefited from participating in parts of this program and even presented an initial version of this work at an IMA seminar in Spring 2012. We thank the anonymous reviewers for their thoughtful comments and Tom Lou for his helpful suggestions in regards to our algorithm for minimizing global dimension. A very preliminary version of this work was submitted to CVPR 2012, we thank one of the anonymous reviewers for some insightful comments that made us modify the GDM algorithm and its theoretical support.
\end{acknowledgements}

\bibliographystyle{spmpsci}
\bibliography{refs_10_31_13}

\end{document}